\newcolumntype{P}[1]{>{\centering\arraybackslash}p{#1}}
\newcommand{\preset}[1]{{{}^{\bullet}{#1}}}
\newcommand{\postset}[1]{{{#1}^{\bullet}}}
\newcommand{\sys}{\mathcal{S}}
\newcommand\eqdef    {\mathrel{:=}}
\newcommand\bench[2][]{%
\if\relax\detokenize{#1}\relax%
\textsc{#2}\else \textsc{#2}\hspace{0.3pt}{\footnotesize(}#1{\footnotesize)}\fi\@\xspace}
\newcommand{\dist} {\mathit{dist}}
\newcommand{\Language} {\mathcal{L}}
\newcommand{\precision} {\mathit{prec}}
\newcommand{\Art}[2]{\begin{array}[t]{@{}#1@{}}#2\end{array}}
\tikzstyle{transition} = [rectangle, draw, inner sep=0,text width=0.4cm, minimum height=.4cm, text centered]
\tikzstyle{place}=[circle,draw,minimum size=4mm]
\newtheorem{algorithm}{Algorithm}
\newcommand{\sequence}[1]{\ensuremath \langle #1 \rangle}
\begin{document}
\sloppy



\title{Anti-Alignments -- Measuring The Precision of Process Models and Event Logs}
\author{Thomas Chatain \and Mathilde Boltenhagen \and Josep Carmona}
\institute{
LSV, ENS Paris-Saclay, CNRS, Inria, Université Paris-Saclay, Cachan (France)
\\
\email{chatain@lsv.fr}
\\
LSV, CNRS, ENS Paris-Saclay, Inria, Université Paris-Saclay, Cachan (France)
\\
\email{boltenhagen@lsv.fr}
\\
Universitat Polit\`ecnica de Catalunya, Barcelona (Spain)
\\
\email{jcarmona@cs.upc.edu}
}

\maketitle

\begin{abstract}
Processes are a crucial artefact in organizations, since they coordinate
the execution of activities so that products and services are provided.
The use of models to analyse the underlying processes is a well-known practice. 
However, due to the complexity and continuous evolution of their processes, 
organizations need an effective way of analysing the relation between processes and models.
Conformance checking techniques asses the suitability of a process model in representing 
an underlying process, observed through a collection of real executions.
%
One important metric in conformance checking is to asses the precision of the model with
respect to the observed executions, i.e., characterize the ability of the model
to produce behavior unrelated to the one observed.
In this paper we present the notion of {\em anti-alignment} as a concept to help unveiling runs in the model
that may deviate significantly from the observed behavior. Using anti-alignments, a new metric for precision is
proposed. In contrast to existing metrics, anti-alignment based precision metrics satisfy most of the required 
axioms highlighted in a recent publication.  
Moreover, a complexity analysis of the problem of computing anti-alignments is provided, which sheds light into
the practicability of using anti-alignment to estimate precision.
Experiments are provided that witness
the validity of the concepts introduced in this paper.
\end{abstract}





\section{Introduction}



Relating observed and modelled process behavior is the lion's share of conformance checking~\cite{CarmonaDSW18}. Observed behavior is
often recorded in form of {\em event logs}, that store the footprints of process executions. Symmetrically, process models are 
representations of the underlying process, which can be automatically discovered or manually designed.
With the aim of quantifying this relation, conformance checking techniques consider four quality dimensions: fitness, precision,
generalization and simplicity~\cite{RozinatA08}. For the first three dimensions, the {\em alignment} between a process model and 
an event log is of paramount importance, since it allows relating modeled and observed behavior~\cite{AryaThesis}. 

Given a process model and a trace in the event log, an alignment provides  the run in the model which mostly resembles the observed trace. When alignments 
are computed, the quality dimensions can be defined on top~\cite{AryaThesis,JorgeMunozPhD}. In a way, alignments are optimistic: although observed behavior 
may deviate significantly from modeled behavior, it is always assumed that the least deviations are the best explanation (from the model's perspective) for 
the observed behavior.

In this paper we present a somewhat symmetric notion to alignments, denoted as {\em anti-alignments}. Given a process model and a log, an anti-alignment is a run
of the model that mostly deviates from any of the traces observed in the log. The motivation for anti-alignments is precisely to compensate the optimistic view
provided by alignments, so that the model is queried to return highly deviating behavior that has not been seen in the log. In contexts where the process model
should adhere to a certain behavior and not leave much room for exotic possibilities (e.g., banking, healthcare), the absence of highly deviating anti-alignments may 
be a desired property for a process model. Using anti-alignments one cannot only catch deviating behavior, but also use it to improve some of the current quality metrics considered in conformance checking. In this paper we highlight the strong relation of anti-alignments and the precision metric: a highly-deviating anti-alignment may be considered as a witness for a loss in precision. Current metrics for precision lack this ability of exploring the model behavior beyond 
what is observed in the log, thus being considered as short-sighted~\cite{AdriansyahMCDA15}.

We cast the problem of computing anti-alignments as the satisfiability of a Boolean formula, and provide high-level techniques which can for instance compute the 
most deviating anti-alignment for a certain run length, or the shortest anti-alignment for a given number of deviations.

Anti-alignments are related to the {\em completeness of the log}; a log is complete if it contains all the behavior of the underlying 
process~\cite{AalstBook}. For incomplete logs, the alternatives for computing anti-alignments grow, making it difficult to tell the difference between
behavior not observed but meant to be part of the process, and behavior not observed which is not meant to be part of the process. Since there exists
already some metrics to evaluate the completeness of an event log (e.g.,~\cite{Yang12}), we assume event logs have a high level of completeness
before they are used for computing anti-alignments. Notice that in presence of an incomplete event log, anti-alignments can be used to interactively complete it: an anti-alignment that is certified by the stake-holder as valid process behavior can be appended to the event log to make it more complete. 

This work is an extension of recent publications related to anti-alignments: in~\cite{DBLP:conf/apn/ChatainC16} we established 
for the first time the notion of anti-alignments based on the Hamming distance, and proposed a simple metric to estimate precision. Then, the work in~\cite{DBLP:conf/bpm/DongenCC16} elaborated the notion of anti-alignments, heuristically computing them for the Levenshtein distance by adapting the $A^*$ search technique, and proposed two new notions for trace-based and log-based precision, that can be combined to estimate precision of process models. However, as it was claimed recently in a survey paper advocating for properties precision
metrics should have~\cite{TAX20181}, it was not known the satisfiability of the properties for the aforementioned metrics.

The contributions of the paper with respect to our previous work are now enumerated.

\begin{itemize}
\item We show how anti-alignments can be computed in an optimal way for the Levenshtein distance, without increasing the complexity class of the problem. Moreover we relate the two available distance encodings (Hamming and Levenshtein), and show the implications of using each one for anti-alignment based precision.
 \item We adapt the precision metrics from~\cite{DBLP:conf/bpm/DongenCC16} to not depend on 
 a particular length defined apriori.
 \item We prove the adherence of one of the new metrics proposed in this paper to most of the properties in~\cite{TAX20181}.
\item A novel implementation is provided, with several improvements, which makes it able to deal with larger instances.
\item A new evaluation section is reported, that show empirically the capabilities of the proposed technique for large and real-life instances.
\end{itemize}

The remainder of the paper is organized as follows: in the next section, a
simple example is used to emphasize the importance of anti-alignments and its application to
estimate precision is shown.
Then in Section~\ref{sec:preliminaries} the basic theory needed for the
understanding of the paper is introduced. Section~\ref{sec:anti-alignments}
provides the formal definition of anti-alignments, whilst
Section~\ref{sec:encoding} formalizes the encoding into SAT of the problem of
computing anti-alignments.
In Section~\ref{sec:precision}, we define a new metric, based on
anti-alignments, for estimating precision of process models.
Experiments are reported in Section~\ref{sec:experiments}, and related work in
Section~\ref{sec:related}.
Section~\ref{sec:conclusions} concludes the paper and gives some hints for
future research directions.

\section{A Motivating Example}


Let us use the example shown in Figure~\ref{fig:running} for illustrating the notion of anti-alignment. The example was originally
presented in~\cite{BrouckeMCBV14}, and in this paper we present a very abstract version of it in Figure~\ref{fig:running}(a): 
The modeled process describes a realistic transaction process within a banking context. The 
process contains all sort of monetary checks, authority notifications, and logging mechanisms. 
The process is initiated when 
a new transaction is requested, opening a new instance in the system and registering all the components involved. The second step 
is to run a check on the person (or entity) origin of the monetary transaction. Then, the actual payment is processed differently, 
depending of the payment modality chosen by the sender (cash, cheque and payment). Later, the receiver is checked and the money is 
transferred. Finally, the process ends registering the information, notifying it to the required actors and authorities, and 
emitting the corresponding receipt. 

\begin{figure}[t]
\centering
\includegraphics[width=10.5cm]{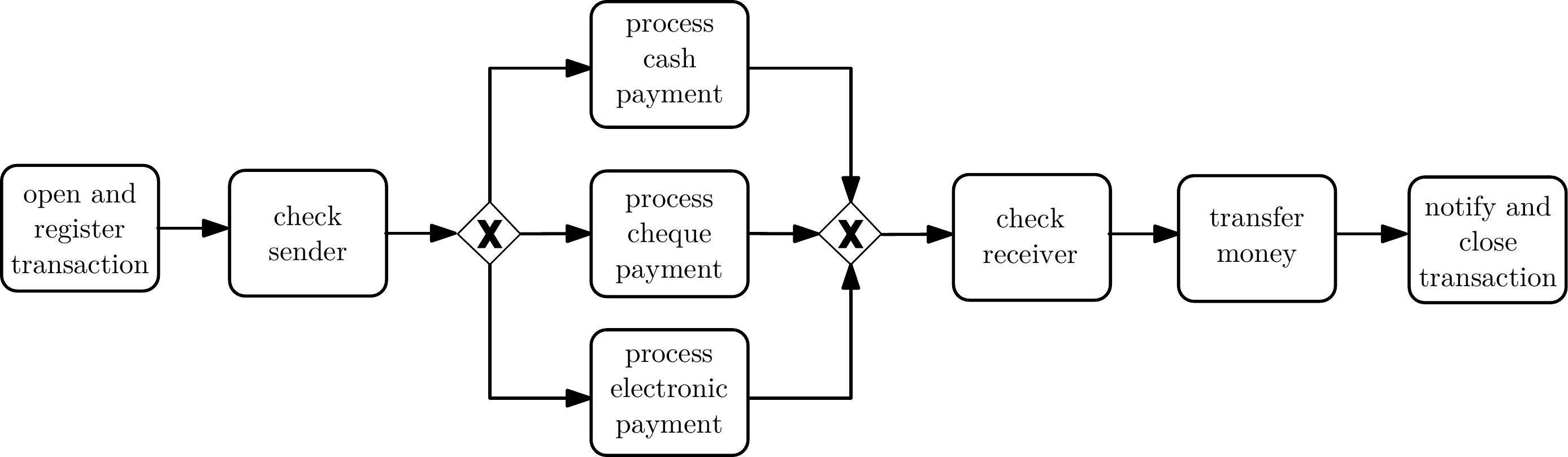}\\
(a)\\
\vspace*{0.25cm}
\includegraphics[width=10.5cm]{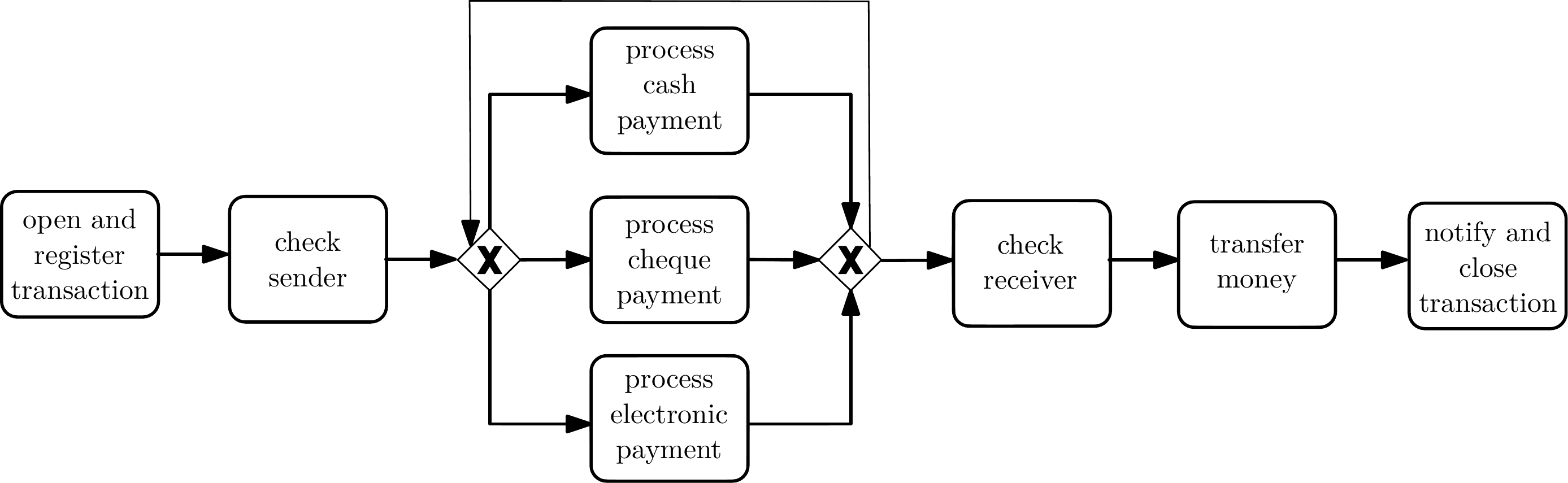}\\
(b)
\caption{Example (adapted from~\cite{BrouckeMCBV14}). (a) Initial process model, (b) Modified process model.}
\label{fig:running}
\end{figure}

Assume that a log covering all the three possible variants (corresponding to the three possible payment methods) with respect of the model in Figure~\ref{fig:running}(a) is given. The three different variants for this log will be:
\begin{center}
{\tt ort, cs, pcap, cr, tm, nct}\\
{\tt ort, cs, pchp, cr, tm, nct}\\
{\tt ort, cs, pep, cr, tm, nct} $\ $\\
\end{center}
\noindent where we use the acronym for each one of the actions performed, e.g., {\tt ort} stands for {\em open and register transaction}.

For this pair of model and log, most of the current metrics for precision (e.g.,~\cite{AdriansyahMCDA15}) will rightly assess a very high precision. In fact, since no deviating anti-alignment can be obtained because every model run is in the log, the anti-alignment based precision metric from 
this paper will also assess a high (in our case, perfect) precision.

Now assume that we modify a bit the model, adding a loop around the alternative stages for the payment. Intuitively, 
this (malicious) modification in the process model may allow to pay several times although only one transfer will be done. The modified high-level
overview is shown in Figure~\ref{fig:running}(b). The aforementioned metric for precision will not consider this 
modification as a severe one: the precision of the model with respect to the log will be very similar to the one for the model in Figure~\ref{fig:running}(a).


Remarkably, this modification in the process model comes with a new highly
deviating anti-alignment denoting a run of the model that contains more than one
iteration of the payment:
\begin{center}
{\tt ort, cs, pcap, pchp, pchp, pep, pcap, cr, tm, nct}\\
\end{center}
Clearly,
this model execution where five payments have been recorded is possible in the process of Figure~\ref{fig:running}(b). Correspondingly, the precision
of this model in describing the log of only three variants will be significantly lowered in the metric proposed in this paper, since the anti-alignment
produced is very different from any of the three variants recorded in the event log.


\section{Preliminaries}
\label{sec:preliminaries}


\newcommand{\arc}[2]{#1\rightarrow #2}
\newcommand{\larc}[3]{#1\stackrel{#2}{\rightarrow}#3}
\newcommand{\longarc}[2]{#1\longrightarrow #2}
\newcommand{\longlarc}[3]{#1\stackrel{#2}{\longrightarrow}#3}
\newcommand{\firing}[1]{[#1\rangle}        



\begin{definition}[(Labeled) Petri net]
  A (labeled) Petri Net~\cite{Murata89} is a tuple $N = \langle P,T,\mathcal{F}, M_{0}, M_f, \Sigma, \lambda \rangle$, where
  $P$ is the set of places, $T$ is the set of transitions (with $P \cap T = \emptyset$), $\mathcal{F}: (P \times T) \cup (T \times P) \to \{0,1\}$ is the flow relation, $M_0$ is the initial marking, $M_f$ is the final marking,
  \(\Sigma\) is an alphabet of actions and \(\lambda : T \to \Sigma\) labels every transition by an action.

%
\end{definition}

A marking
is an assignment of a non-negative integer to each place. If $k$ is assigned to place $p$
by marking $M$ (denoted $M(p) = k$), we say that $p$ is marked with $k$ tokens. Given a node 
$x \in P \cup T$, its pre-set and post-set are denoted by $\preset{x}$ and $\postset{x}$
respectively. 

A transition $t$ is {\em enabled} in a marking $M$ when all places in
$\preset{t}$ are marked. When a transition $t$ is enabled, it can {\em fire\/} by removing a token
from each place in $\preset{t}$ and putting a token to each place in $\postset{t}$.
A marking $M'$ is {\em reachable} from $M$ if there is a sequence of firings
$t_1 \ldots t_n$ that transforms $M$ into $M'$, denoted by ${M}\firing{t_1 \ldots t_n}{M'}$.
We define the \emph{language} of \(N\) as the set of \emph{full runs}
defined by \(\Language(N) \eqdef \{\lambda(t_1) \ldots \lambda(t_n) \mid
{M_0}\firing{t_1 \ldots t_n}{M_f}\}\). 
A Petri net is
{\em k-bounded} if no reachable marking assigns more than $k$ tokens to
any place. A Petri net is bounded if there exist a $k$ for which it is $k$-bounded. A Petri net is {\em safe} if it is 1-bounded. 
A bounded Petri net has an {\em executable loop} if it has a 
reachable marking \(M\) and sequences of transitions  \(u_1 \in T^*\), \(u_2 \in T^+\), \(u_3 \in T^*\) such that
  \({M_0}\firing{u_1}M\firing{u_2}M\firing{u_3}{M_f}\).


\medskip

An event log is a collection of traces, where a trace may appear more than once. Formally:
\begin{definition}[Event Log]
An \emph{event log} $L$ (over an alphabet of actions \(\Sigma\)) is a multiset
of traces \(\sigma \in \Sigma^*\).
\end{definition}

%



Process mining techniques aim
at extracting from a log $L$ a process model $N$ (e.g., a Petri net) with the goal to elicit the process
underlying a system ${\sys}$. ${\sys}$ is considered a language for the sake of comparison. By relating the behaviors of $L$, $\Language(N)$ and ${\sys}$,
particular concepts can be defined~\cite{BuijsDA14}. A log is \emph{incomplete} if ${\sys} \setminus L \ne
\emptyset$. 
A model $N$ \emph{fits} log $L$ if $L \subseteq \Language(N)$. A model is
\emph{precise} in describing a log $L$ if $\Language(N) \setminus L$ is small. A model $N$
represents a \emph{generalization} of log $L$ with respect to system ${\sys}$ if some behavior in ${\sys}
\setminus L$ exists in $\Language(N)$. Finally, a model $N$ is \emph{simple} when it has the minimal
complexity in representing $\Language(N)$, i.e., the well-known \emph{Occam's razor principle}.


\section{Anti-Alignments}
\label{sec:anti-alignments}

The idea of \emph{anti-alignments} is to seek in the language of a model \(N\) what
are the runs which differ considerably with all the observed traces.
Hence, this is the opposite of the notion of \emph{alignments}~\cite{AryaThesis}
which is central in process mining: for many tasks in conformance
checking like {\em process model repair} or {\em decision point analysis}, one needs indeed to find the run which is the most
similar to a given log trace~\cite{AalstBook}.
In this paper, we are focusing on precision and for this, traces which are not
similar to any observed trace in the log serve as witnesses for bad precision.
All these notions anyway depend on a definition of distance between two traces
(typically a model trace, i.e.\ a run of the model, and an observed log trace).
We assume a given distance function \(\dist: \Sigma^* \times \Sigma^* \rightarrow [0,1]\)
computable in polynomial time and such that
\footnote{Actually, we do not require that \(\dist\) satisfies the usual
  properties of distance functions like symmetry or triangle inequality.}
\begin{itemize}
\item for every \(\gamma \in \Sigma^*\), \(\dist(\gamma, \gamma) = 0\),
\item for every \(\sigma \in \Sigma^*\), \(\dist(\gamma, \sigma)\) converges to
  \(1\) when \(|\gamma|\) diverges to \(\infty\).
\end{itemize}

\begin{definition}[Anti-alignment]\label{def-antialignment}
  For a distance threshold \(m \in [0,1]\),
  an \(m\)-\emph{anti-alignment} of a model \(N\) w.r.t.\ a log \(L\) is a
  full run \(\gamma \in \Language(N)\) such that \(\dist(\gamma, L) \geq m\),
  where \(\dist(\gamma, L)\) is defined as the \(\min_{\sigma \in
    L}\dist(\gamma, \sigma)\).\footnote{Since the function \(\dist\) takes its
    values in \([0, 1]\), we define by convention \(\dist(\gamma, \emptyset) \eqdef 1\).}

\end{definition}

For the following examples, we show anti-alignments w.r.t.\ two possible
choices of distance \(\dist\): Levenshtein's distance and Hamming distance.

\begin{definition}[Levenshtein's edit distance \boldmath\(\dist^L\)]
  \label{def:Levenshtein}
  Levenshtein's edit distance \(\dist^L(\gamma, \sigma)\) between two
  traces \(\gamma, \sigma \in \Sigma^*\) is based on the minimum number \(n\) of
  deletions and insertions needed to transform \(\gamma\) to \(\sigma\).
  In order to get a normalized distance between 0 and 1, we define Levenshtein's
  edit distance \(\dist^L(\gamma, \sigma) = \frac{n}{\max(1, |\gamma| + |\sigma|)}\).
\end{definition}

\begin{example}
\begin{figure}[t]
\centering
\includegraphics[width=10.5cm]{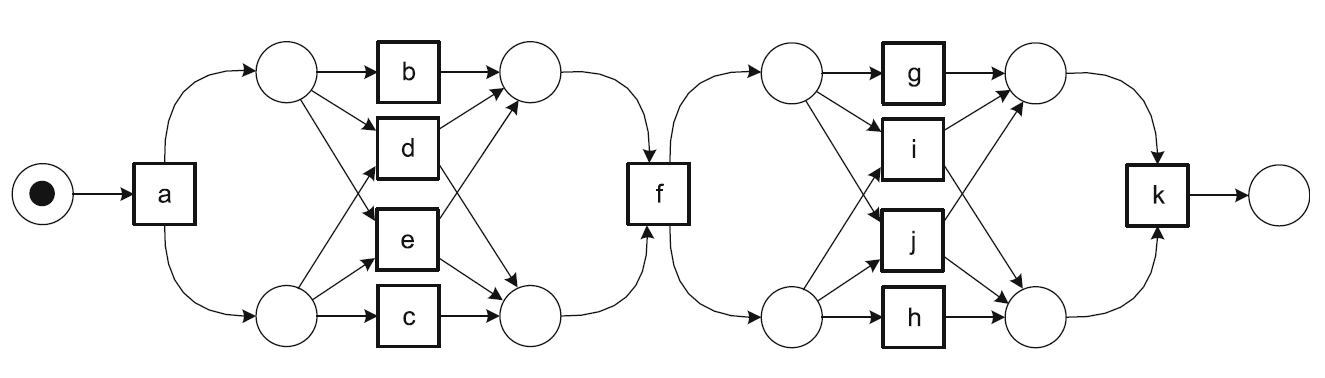}
\[L = \{\Art{l}{
  \sequence{a,b,c,f,g,h,k},\\
  \sequence{a,c,b,f,g,h,k},\\
  \sequence{a,c,b,f,h,g,k},\\
  \sequence{a,b,c,f,h,g,k},\\
  \sequence{a,e,f,i,k},\\
  \sequence{a,d,f,g,h,k},\\
  \sequence{a,e,f,h,g,k}\}\,,}\]
\caption{A process model (taken from~\cite{AalstKRV15}) and an event log. The
  full run $\sequence{a,b,c,f,i,k}$ is a \(\frac{3}{13}\)-anti-alignment for Levenshtein's
  distance and a \(\frac{3}{7}\)-anti-alignment for Hamming distance.
}
\label{fig:anti-alignment-ex}
\end{figure}
Consider the Petri net and log shown in Figure~\ref{fig:anti-alignment-ex}. With
Levenshtein's distance, the full run $\sequence{a,b,c,f,i,k}$ is at distance \(\frac{3}{13}\)
from the log trace $\sequence{a,b,c,f,g,h,k}$ (two deletions and one insertion).
It is at larger distance from the other log traces. Therefore, it is a
\(\frac{3}{13}\)-anti-alignment.
\end{example}

Another interesting choice is Hamming distance. It is in general less 
informative than Levenshtein's distance 
for relating observed and modelled behavior,
but it has the
interest of being very simple to compute. Variants of Hamming distance can also
provide good compromises. In Sections~\ref{sec:encoding} and
\ref{sec:precision}, we will show how to efficiently compute anti-alignments for
Hamming distance using SAT solvers.

\begin{definition}[Hamming distance \boldmath\(\dist^H\)]
  \label{def:Hamming_dist}
  For two traces \(\gamma = \gamma_1 \dots \gamma_n\) and \(\sigma = \sigma_1
  \dots \sigma_n\), of same length \(n > 0\), define \(\dist^H(\gamma, \sigma)
  \eqdef \frac1n \cdot \big|\big\{i \in \{1 \dots n\} \mid \gamma_i \neq \sigma_i\big\}\big|\).
  For \(\gamma\) longer than \(\sigma\), we define
  \(\dist^H(\gamma, \sigma) \eqdef \dist^H(\gamma, \sigma \cdot w^{|\gamma| -
    |\sigma|})\), where \(w \not\in \Sigma\) is a special padding
  symbol (\(w\) for `wait'); we proceed symmetrically when \(\gamma\) is shorter than \(\sigma\).
\end{definition}

\begin{lemma}
  \label{lem:pH_lt_pL}
  Observe that, for every \(\gamma\) and \(\sigma\) (assuming one of them at
  least is nonempty), \(\dist^L(\gamma, \sigma) \leq \dist^H(\gamma, \sigma)\).
\end{lemma}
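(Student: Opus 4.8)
The plan is to compare the two distances directly through their defining edit operations. First I would recall the key fact that the Hamming distance, as defined here via padding with the symbol $w$, can itself be read as the cost of a particular (generally non-optimal) way of transforming $\gamma$ into $\sigma$ using only deletions and insertions. Concretely, suppose \wlogg $|\gamma| \geq |\sigma|$, write $n = |\gamma|$ and $k = |\sigma|$, and let $\sigma' = \sigma \cdot w^{n-k}$ so that $\dist^H(\gamma,\sigma) = \frac1n |\{i : \gamma_i \neq \sigma'_i\}|$. For each position $i$ where $\gamma_i \neq \sigma'_i$ we can, in the edit-distance bookkeeping, delete $\gamma_i$ and insert $\sigma'_i$ (and if $\sigma'_i = w$, i.e.\ $i > k$, we only delete $\gamma_i$ and insert nothing, since $w \notin \Sigma$). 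Positions where $\gamma_i = \sigma'_i$ (necessarily $i \leq k$) are left untouched. This sequence of operations transforms $\gamma$ into $\sigma$.

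Next I would count the cost of this transformation. Let $j = |\{i \leq k : \gamma_i \neq \sigma_i\}|$ be the number of mismatches among the first $k$ positions, and note that the remaining $n-k$ positions are all mismatches against $w$. The scheme above uses $j$ deletions and $j$ insertions for the first group, and $n-k$ deletions for the tail, for a total of $2j + (n-k)$ deletion/insertion operations. Hence the optimal number $n^\star$ of deletions and insertions transforming $\gamma$ into $\sigma$ satisfies $n^\star \leq 2j + (n-k)$. On the other hand the number of mismatches counted by Hamming is exactly $j + (n-k)$, so $\dist^H(\gamma,\sigma) = \frac{j + (n-k)}{n}$, while $\dist^L(\gamma,\sigma) = \frac{n^\star}{\max(1,\,n+k)} = \frac{n^\star}{n+k}$ (using $n \geq 1$, since at least one of the traces is nonempty and $n$ is the length of the longer one).

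Finally I would close the inequality with a short numerical estimate. We need $\frac{n^\star}{n+k} \leq \frac{j + (n-k)}{n}$, and since $n^\star \leq 2j + (n-k)$ it suffices to show $\frac{2j + (n-k)}{n+k} \leq \frac{j + (n-k)}{n}$. Cross-multiplying (both denominators positive), this is $n(2j + (n-k)) \leq (n+k)(j + (n-k))$, i.e.\ $2jn + n(n-k) \leq jn + jk + (n+k)(n-k)$, i.e.\ $jn + n(n-k) \leq jk + n^2 - k^2$, i.e.\ $jn + n^2 - nk \leq jk + n^2 - k^2$, which simplifies to $jn - nk \leq jk - k^2$, i.e.\ $n(j - k) \leq k(j - k)$, i.e.\ $(n-k)(j-k) \leq 0$. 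This holds because $n \geq k$ and $j \leq k$ (the mismatch count among the first $k$ positions cannot exceed $k$). The symmetric case $|\gamma| < |\sigma|$ is identical after swapping roles, which is legitimate since the padding definition of $\dist^H$ is symmetric in $\gamma$ and $\sigma$ and $\dist^L$ is likewise symmetric in its two arguments. I expect the only mildly delicate point to be the bookkeeping in the first paragraph — making sure the padding symbol $w$ is handled as a pure deletion rather than an insertion of a symbol outside $\Sigma$ — while the rest is a routine cross-multiplication.
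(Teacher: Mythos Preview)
Your proposal is correct and follows essentially the same approach as the paper: bound the Levenshtein numerator by the cost of the specific edit sequence that substitutes at each mismatch position and then handles the length difference, and then verify the normalized inequality by an elementary cross-multiplication that reduces to the observation that the number of mismatches in the overlap cannot exceed the shorter length. The only cosmetic differences are that you take the longer trace to be \(\gamma\) rather than \(\sigma\), and you phrase the final step as \((n-k)(j-k)\leq 0\) instead of the paper's equivalent chain of inequalities.
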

\begin{proof}
  Assume w.l.o.g.\ \(0 < |\gamma| < |\sigma|\).
  Let \(n \eqdef \big|\big\{i \in \{1 \dots |\gamma|\} \mid \gamma_i \neq \sigma_i\big\}\big|\).
  We have, \(\dist_H(\gamma, \sigma) = \frac{n+|\sigma|-|\gamma|}{|\sigma|}\).
  We have also \(\dist_L(\gamma, \sigma) \leq
  \frac{2n+|\sigma|-|\gamma|}{|\gamma|+|\sigma|}\) because one way to transform
  \(\gamma\) to \(\sigma\) is to replace \(\gamma_i\) by \(\sigma_i\) (one
  deletion and one insertion) at each position \(i \leq |\gamma|\) where they
  differ (\(2n\) editions), and then to insert the letters \(\sigma_{|\gamma|+1}
  \dots \sigma_{|\sigma|}\) (\(|\sigma|-|\gamma|\) editions).
  It remains to see that \(n \leq |\gamma|\), which implies
  \begin{eqnarray*}
    n(|\sigma|-|\gamma|) & \leq & |\gamma|(|\sigma|-|\gamma|)\\
    n|\sigma| & \leq & |\gamma|(n+|\sigma|-|\gamma|)\\
    n|\sigma|+(n+|\sigma|-|\gamma|)|\sigma| & \leq & |\gamma|(n+|\sigma|-|\gamma|)+(n+|\sigma|-|\gamma|)|\sigma|\\
    (2n+|\sigma|-|\gamma|)|\sigma| & \leq & (|\gamma|+|\sigma|)(n+|\sigma|-|\gamma|)\\
    \frac{2n+|\sigma|-|\gamma|}{|\gamma|+|\sigma|} & \leq & \frac{(n+|\sigma|-|\gamma|)}{|\sigma|}\\
    \dist^L(\gamma, \sigma) & \leq & \dist^H(\gamma, \sigma)\;.
  \end{eqnarray*} \qed
\end{proof}

\begin{example}
Consider the Petri net and log shown in Figure~\ref{fig:anti-alignment-ex}. With
Hamming distance, the full run $\gamma = \sequence{a,b,c,f,i,k}$ is at distance \(\frac{3}{7}\)
from the log trace $\sigma = \sequence{a,b,c,f,g,h,k}$ ($i$ and $k$ do not match with $g$
and $h$, and \(\gamma\) is shorter than \(\sigma\), which counts for the third mismatch).
It is at larger distance from the other log traces. Therefore, it is a
\(\frac{3}{7}\)-anti-alignment.
\end{example}

\begin{lemma}\label{lemma-largest_m}
  For every log \(L\) and finite model \(N\), we have:
  \begin{enumerate}
  \item if the model has finitely many full runs, then there exists (at least)
    one maximal anti-alignment \(\gamma\) of \(N\) w.r.t.\ \(L\),
    i.e.\ \(\gamma\) that maximizes the distance \(\dist(\gamma, L)\);
  \item if the model has infinitely many full runs, then there exist
    anti-alignments \(\gamma\) with \(\dist(\gamma, L)\) arbitrarily close
    to \(1\). Yet, there may not exist any $1$-anti-alignment, i.e.\ there
    is no guarantee that the limit \(\dist(\gamma, L) = 1\) is reached for
    any \(\gamma\).
  \end{enumerate}
\end{lemma}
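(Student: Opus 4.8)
The plan is to prove the two items separately; both arguments are short, and essentially all the content sits in the final sentence of item~2.

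For item~1, I would simply observe that $\{\dist(\gamma, L) \mid \gamma \in \Language(N)\}$ is a subset of $[0,1]$ that is \emph{finite}, since by hypothesis $N$ has only finitely many full runs. A finite set of reals attains its maximum, and any $\gamma \in \Language(N)$ realizing it is, by Definition~\ref{def-antialignment}, a maximal anti-alignment. (If $\Language(N) = \emptyset$ the statement is vacuous, so one may assume $M_f$ is reachable.)

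For item~2, the preliminary observation is that infinitely many full runs forces full runs of unbounded length: the words of $\Language(N)$ lie over the finite set of labels $\lambda(T)$, so there are only finitely many of any bounded length, hence for every $k$ there is a full run longer than $k$ (equivalently, at the level of firing sequences, there are at most $|T|^k$ sequences of length $k$, so this does not even need boundedness of $N$). Now fix $\epsilon > 0$. By the second hypothesis on $\dist$ — that $\dist(\gamma,\sigma)$ converges to $1$ as $|\gamma| \to \infty$ for each fixed $\sigma$ — choose for every $\sigma \in L$ a bound $N_\sigma$ with $\dist(\gamma, \sigma) \ge 1 - \epsilon$ whenever $|\gamma| \ge N_\sigma$, and put $N_\epsilon \eqdef \max_{\sigma \in L} N_\sigma$, which is finite because $L$ is finite (if $L = \emptyset$, then $\dist(\gamma, L) = 1$ for every $\gamma$ by convention and there is nothing to do). Picking any full run $\gamma$ with $|\gamma| \ge N_\epsilon$ gives $\dist(\gamma, L) = \min_{\sigma \in L}\dist(\gamma, \sigma) \ge 1 - \epsilon$, so $\gamma$ is a $(1-\epsilon)$-anti-alignment; since $\epsilon$ was arbitrary, the first claim of item~2 follows.

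It remains to justify that a $1$-anti-alignment need not exist, and for this I would exhibit a concrete counterexample rather than argue in general. Take the safe net that first fires a transition labelled $a$ and then may fire a self-looping transition labelled $a$ any number of times before reaching $M_f$, so that $\Language(N) = \{a^k \mid k \ge 1\}$, together with $L = \{\sequence{a}\}$. Then $\dist^L(a^k, \sequence{a}) = \frac{k-1}{k+1} < 1$ for every $k \ge 1$, and likewise $\dist^H(a^k, \sequence{a}) = \frac{k-1}{k} < 1$ (after padding, the two traces still agree in their first position); hence $\dist(\gamma, L) < 1$ for all $\gamma \in \Language(N)$ even though these values tend to $1$. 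The only point needing care here — and, in truth, the only subtle point in the whole lemma — is to keep the empty run out of $\Language(N)$: otherwise $\sequence{}$ would already be a $1$-anti-alignment of $L = \{\sequence{a}\}$, which is exactly why the net performs one mandatory $a$ before entering the loop.
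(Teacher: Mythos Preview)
Your proof is correct and follows essentially the same line as the paper's: a finite set of reals attains its maximum for item~1, and for item~2 one uses that infinitely many full runs over a finite alphabet force unbounded length, then combines the pointwise convergence $\dist(\gamma,\sigma)\to 1$ with the finiteness of $L$ to conclude that the minimum also tends to $1$. The one notable difference is that you actually supply a concrete counterexample (the $a^+$ net versus $L=\{\sequence{a}\}$) for the ``the value $1$ need not be attained'' clause, whereas the paper's own proof simply omits this part; your example is valid for both $\dist^H$ and $\dist^L$, and your remark about excluding the empty run is exactly the right caveat.
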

\begin{proof}
  If the model has finitely many full runs, then one of them must be a maximal
  anti-alignment.

  Conversely, if \(N\) is finite and has infinitely many runs, then there must
  exist arbitrary long full runs; more formally, there exists an infinite
  sequence of full runs \((\gamma_i)_{i \in \mathbb{N}}\) of strictly increasing
  length. For every \(\sigma \in L\), the sequence of
  \((\dist(\gamma_i, \sigma))_{i \in \mathbb{N}}\) converges to \(1\). Since
  there are finitely many \(\sigma \in L\),
  \((\min_{\sigma \in L}\dist(\gamma_i, \sigma))_{i \in \mathbb{N}}\)
  converges to \(1\) as well.
  \qed
\end{proof}
Maximal anti-alignments will be used in Section~\ref{sec:precision} to define
our precision metric. The case of models with executable loops will be discussed
in Subsection~\ref{sec:loops}.

\begin{lemma}\label{lemma-reachability}
  The problem of deciding, given a finite model \(N\) and a log \(L\),
  whether there exists a \(0\)-anti-alignment of \(N\) w.r.t.\ \(L\), has the
  same complexity as reachability for Petri nets.
\end{lemma}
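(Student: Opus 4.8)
The plan is to observe that the threshold \(m = 0\) is degenerate, and then to exhibit trivial reductions in both directions between the \(0\)-anti-alignment existence problem and Petri net reachability. First I would note that, by the assumptions on \(\dist\) (its codomain is \([0,1]\), and \(\dist(\gamma,\emptyset) \eqdef 1\) by convention), we always have \(\dist(\gamma, L) = \min_{\sigma \in L}\dist(\gamma,\sigma) \geq 0\). Hence the constraint \(\dist(\gamma, L) \geq 0\) in Definition~\ref{def-antialignment} is vacuous: a \(0\)-anti-alignment of \(N\) w.r.t.\ \(L\) is nothing but an arbitrary full run \(\gamma \in \Language(N)\), independently of \(L\).

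Consequently, a \(0\)-anti-alignment exists if and only if \(\Language(N) \neq \emptyset\), i.e.\ if and only if there is a firing sequence \({M_0}\firing{t_1 \ldots t_n}{M_f}\), which is exactly the statement that \(M_f\) is reachable from \(M_0\). This gives one direction: an instance \((N, L)\) of the \(0\)-anti-alignment existence problem is mapped, in linear time, to the reachability instance obtained by forgetting \(L\) and the labeling of \(N\). For the converse, given a reachability instance consisting of a plain Petri net, an initial marking \(M_0\) and a target marking \(M_f\), I would equip it with an arbitrary labeling \(\lambda\) (for instance all transitions mapped to one action) and declare \(M_f\) its final marking, then pair it with an arbitrary log (say the empty log); by the observation above this labeled net admits a \(0\)-anti-alignment if and only if \(M_f\) is reachable from \(M_0\) in the original net. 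Both reductions are clearly polynomial, so the two decision problems are inter-reducible, which is the claim.

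There is essentially no hard step here: the entire content is recognizing that the choice \(m = 0\) makes the distance condition trivially true, so that \(0\)-anti-alignment existence collapses to non-emptiness of the model's language, which is reachability in disguise. (By contrast, the genuinely interesting complexity questions — where the real obstacles lie — concern thresholds \(m > 0\) and the computation of maximal anti-alignments, which is exactly why the subsequent sections turn to a SAT encoding.)
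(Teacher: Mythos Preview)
Your proposal is correct and follows essentially the same approach as the paper: the paper's own proof is a single sentence observing that the existence of a \(0\)-anti-alignment is equivalent to \(\Language(N) \neq \emptyset\), i.e.\ to reachability of \(M_f\) from \(M_0\). Your write-up simply spells out the trivial reductions in both directions more explicitly than the paper does.
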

\begin{proof}
  This is equivalent to checking whether \(\Language(N) \neq \emptyset\), i.e.
  whether \(M_f\) is reachable from \(M_0\).\qed
\end{proof}

By definition, a \(0\)-anti-alignment of \(N\) w.r.t.\ \(L\) is a full run
\(\gamma \in \Language(N)\) satisfying the trivial inequality \(\dist(\gamma, L)
\geq 0\). The same problem with a strict inequality is also
of interest. We will need it in Section~\ref{sec:loops}.

\begin{lemma}\label{lemma-precision1}
  The problem of deciding, given a finite model \(N\) and a log \(L\),
  whether there exists a full run \(\gamma \in \Language(N)\) satisfying
  \(\dist(\gamma, L) > 0\) (or equivalently deciding if \(\Language(N)
  \not\subseteq L\)), has the same complexity as
  reachability for Petri nets.
\end{lemma}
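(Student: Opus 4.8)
The plan is to prove the lemma by exhibiting polynomial-time reductions in both directions between this decision problem and reachability for Petri nets, just as for Lemma~\ref{lemma-reachability} but now accounting for the strict inequality. For the \emph{hardness} direction, given a reachability instance --- a net $N$ with initial marking $M_0$ and final marking $M_f$ --- I would build $N'$ by adding two fresh places $p_{\mathrm{go}}, p_{\mathrm{done}}$ and one fresh transition $t$ with $\preset{t} = \{p_{\mathrm{go}}\}$ and $\postset{t} = \{p_{\mathrm{done}}\}$, labelled by an arbitrary action of $\Sigma$; the new initial marking is $M_0$ together with one token on $p_{\mathrm{go}}$, and the new final marking is $M_f$ together with one token on $p_{\mathrm{done}}$. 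Every full run of $N'$ fires $t$ exactly once ($t$ is the only transition that can mark $p_{\mathrm{done}}$, and $p_{\mathrm{go}}$ carries a single non-renewable token) and otherwise simulates $N$; hence $\Language(N')$ contains only non-empty traces and is non-empty iff $M_f$ is reachable from $M_0$ in $N$. Taking $L = \{\varepsilon\}$, a legitimate non-empty log, gives $\Language(N') \not\subseteq L$ iff $\Language(N') \neq \emptyset$ iff the reachability instance is positive. (With $L = \emptyset$ this is already Lemma~\ref{lemma-reachability}; the gadget merely shows the hardness survives for genuine non-empty logs.)

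For the \emph{membership} direction, let $\sigma_1, \dots, \sigma_k$ be the distinct traces occurring in $L$. The key observation is that $\Sigma^* \setminus L$ is co-finite, hence accepted by a small deterministic automaton $D$: take the prefix tree (trie) of $\{\sigma_1, \dots, \sigma_k\}$ with one extra absorbing state reached whenever a letter leads out of the trie, and declare accepting the absorbing state together with every trie node distinct from all the $\sigma_i$; then $D$ has at most $1 + \sum_i(|\sigma_i|+1)$ states. I would read $D$ as a safe Petri net (one token travelling among the state-places, one transition per $(\mathrm{state},\mathrm{letter})$ pair carrying that letter as its label) and take its synchronous product with $N$ over labels --- well defined and of polynomial size because $\lambda$ is total, i.e.\ $N$ has no silent transitions. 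The resulting net $N'$ satisfies $\Language(N') = \Language(N) \cap (\Sigma^* \setminus L)$, so $\Language(N) \not\subseteq L$ iff $\Language(N') \neq \emptyset$, i.e.\ iff some marking of the polynomially large set $\{M_f\} \times \{\text{accepting states of } D\}$ is reachable from the initial marking of $N'$; this is a disjunction of polynomially many reachability queries on nets of polynomial size.

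Putting the two reductions together shows the problem is polynomial-time inter-reducible with Petri net reachability, hence of the same complexity. The routine part is the hardness reduction, which is essentially Lemma~\ref{lemma-reachability} plus a one-transition gadget. I expect the real work to be in the membership direction: carefully defining the product of $N$ with the automaton for $\Sigma^* \setminus L$, checking that it has polynomial size and that its full runs are exactly the full runs of $N$ whose label sequence avoids $L$, and dealing with the bookkeeping around the several accepting states of $D$ and the possible membership of $\varepsilon$ in $\Sigma^* \setminus L$.
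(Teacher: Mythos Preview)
Your proposal is correct and follows essentially the same route as the paper. For hardness the paper simply takes \(L=\emptyset\) and invokes the convention \(\dist(\gamma,\emptyset)=1\), so your gadget forcing a non-empty run is unnecessary (though not wrong); for membership the paper builds the same trie-plus-sink automaton and takes the synchronous product, checking reachability of \(M_f\cup\{p_{esc}\}\). Your version is slightly more careful in that you explicitly declare as accepting the trie nodes that are proper prefixes of log traces but not themselves in \(L\), whereas the paper only singles out the absorbing place \(p_{esc}\); this is a genuine refinement, since a full run of \(N\) whose label sequence is such a prefix would not reach \(p_{esc}\) in the paper's construction.
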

\begin{proof}
  The reachability problem reduces to the existence of a full run \(\gamma \in
  \Language(N)\) satisfying \(\dist(\gamma, \emptyset) > 0\): indeed
  \(\dist(\gamma, \emptyset) = 1\) for every \(\gamma\).

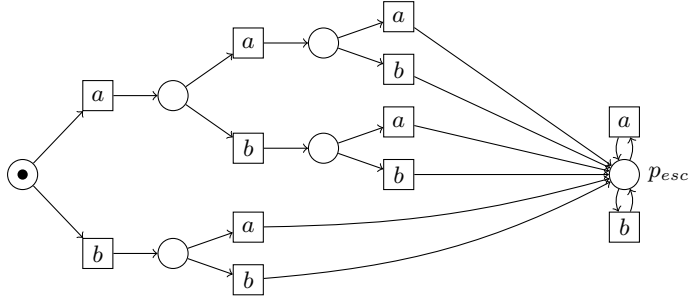
\begin{figure}[tb]
  \begin{center}
    \begin{tikzpicture}[yscale=0.7]
      \node[place,tokens=1] (p) at (0, 1.5) {};
      \node[place] (pa) at (2, 3) {};
      \node[place] (paa) at (4, 4) {};
      \node[place] (pab) at (4, 2) {};
      \node[place] (pb) at (2, 0) {};
      \node[place] (pesc) at (8, 1.5) [label=right:\(p_{esc}\)] {};

      \node[transition] (ta) at (1, 3) {$a$};
      \node[transition] (tb) at (1, 0) {$b$};
      \node[transition] (taa) at (3, 4) {$a$};
      \node[transition] (tab) at (3, 2) {$b$};
      \node[transition] (tba) at (3, 0.5) {$a$};
      \node[transition] (tbb) at (3, -0.5) {$b$};
      \node[transition] (taaa) at (5, 4.5) {$a$};
      \node[transition] (taab) at (5, 3.5) {$b$};
      \node[transition] (taba) at (5, 2.5) {$a$};
      \node[transition] (tabb) at (5, 1.5) {$b$};

      \path[->] (p) edge (ta) edge (tb);
      \path[->] (ta) edge (pa);
      \path[->] (tb) edge (pb);

      \path[->] (pa) edge (taa) edge (tab);
      \path[->] (taa) edge (paa);
      \path[->] (tab) edge (pab);

      \path[->] (pb) edge (tba) edge (tbb);
      \path[->] (tba) edge [bend right=10] (pesc);
      \path[->] (tbb) edge [bend right=15] (pesc);

      \path[->] (paa) edge (taaa) edge (taab);
      \path[->] (pab) edge (taba) edge (tabb);

      \path[->] (taaa) edge [bend right=0] (pesc);
      \path[->] (taab) edge [bend right=0] (pesc);
      \path[->] (taba) edge [bend right=0] (pesc);
      \path[->] (tabb) edge [bend right=0] (pesc);

      \node[transition] (tea) at (8, 2.5) {$a$};
      \path[->] (pesc) edge [bend right=15] (tea);
      \path[->] (tea) edge [bend right=15] (pesc);

      \node[transition] (teb) at (8, 0.5) {$b$};
      \path[->] (pesc) edge [bend right=15] (teb);
      \path[->] (teb) edge [bend right=15] (pesc);

    \end{tikzpicture}
  \end{center}
  \caption{A deterministic Petri net representing the log \(L = \{\sequence{a},
    \sequence{a,b}, \sequence{a,a}, \sequence{b}\}\), for the alphabet of
    actions \(\Sigma = \{a, b\}\). Place \(p_{esc}\) is reached only by the runs
    which do not appear in the log.}
  \label{fig:tree-L}
\end{figure}
  Conversely, deciding if \(\Language(N) \not\subseteq L\) reduces to deciding
  reachability of \(M_f \cup \{p_{esc}\}\) in the synchronous product of \(N\)
  with a deterministic Petri net which represents as a tree the log traces
  sharing their common prefixes, and, from the leaves, marks a sink place
  \(p_{esc}\), as illustrated in Figure~\ref{fig:tree-L}. Hence, every full run
  \(\gamma\) of \(N\), when synchronized with the Petri net representation of
  the log \(L\), leads to a marking of the form \(M_f \cup \{p\}\), and \(p =
  p_{esc}\) iff \(\gamma \not\in L\).
  \qed
\end{proof}

The problem of reachability in Petri nets is known to be decidable, but
non-elementary~\cite{DBLP:conf/stoc/CzerwinskiLLLM19}.

Yet, the complexity drops to NP if a bound is given on the length of the
anti-alignment.
\begin{lemma}\label{lemma-complexity_anti-alignment-bounded-length}
  The problem of deciding, for a Petri net \(N\), a log \(L\), a rational
  distance threshold \(m \in \mathbb{Q}\) and a bound \(n \in \mathbb{N}\), if
  there exists a \(m\)-anti-alignment \(\gamma\) such that \(|\gamma| \leq n\),
  is NP-complete.
  We assume that \(n\) is encoded in unary.\footnote{
    Since \(n\) has typically the same order of magnitude as the length of the
    longest traces in the log, encoding \(n\) in unary does not significantly
    affect the size of the problem instances.}
\end{lemma}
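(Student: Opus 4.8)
The plan is to prove membership in NP and NP-hardness separately. For membership, the witness is simply the candidate run $\gamma$ together with the firing sequence $t_1 \dots t_k$ that produces it, with $k \leq n$; since $n$ is encoded in unary, this certificate has polynomial size. The verifier first checks, in polynomial time, that $M_0\firing{t_1 \dots t_k}{M_f}$ by simulating the firings step by step (each marking is polynomially bounded because we only take $k \leq n$ steps from $M_0$, so the token counts stay small regardless of boundedness of the net), and that $\gamma = \lambda(t_1)\dots\lambda(t_k)$. Then it must verify $\dist(\gamma, L) \geq m$, i.e.\ $\dist(\gamma,\sigma) \geq m$ for every $\sigma \in L$. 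Here I would invoke the standing assumption that $\dist$ is computable in polynomial time: we compute $\dist(\gamma,\sigma)$ for each of the finitely many traces $\sigma$ in the log (the log is part of the input, so $|L|$ is polynomial) and compare each value against the rational $m$. Hence the whole check is polynomial and the problem is in NP.

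For NP-hardness, I would reduce from a standard NP-complete problem. The cleanest route is to reduce from the Hamiltonian path / directed Hamiltonian cycle problem, or alternatively from SAT; I would go with a reduction encoding a SAT instance or $3$-SAT instance as a Petri net whose bounded-length runs correspond to truth assignments, choosing the distance threshold $m$ and bound $n$ so that an $m$-anti-alignment exists iff the formula is satisfiable. Concretely, given a propositional formula $\varphi$ over variables $x_1,\dots,x_r$, build a net that, in a run of fixed length, nondeterministically ``chooses'' a value for each $x_i$ (a small diamond gadget per variable emitting a letter that records the chosen literal) and then only reaches $M_f$ if the guessed assignment satisfies all clauses (a checking phase, gadget per clause). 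Take the log $L$ to consist of a single trace (or a small family of traces) that agrees with the ``all-false'' prefix but whose length forces any satisfying run to differ in at least one position, so that $\dist(\gamma,L) \geq m$ precisely for runs corresponding to satisfying assignments. With Hamming distance this bookkeeping is especially transparent. I would make sure the net is polynomial in $|\varphi|$ and that $n$ — the number of transitions fired along such a run — is linear in $r$ plus the clause-checking length, hence expressible in unary within polynomial size.

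The main obstacle I anticipate is the NP-hardness direction, and specifically calibrating the log and the threshold $m$ so that ``$\gamma$ is at distance $\geq m$ from every log trace'' is equivalent to ``$\gamma$ encodes a satisfying assignment,'' without accidentally making every run of the net an anti-alignment (which would trivialize the reduction) or excluding satisfying runs. A robust way to handle this is to have the net's runs always be of the same length $n$ and structured as a fixed-format word, put into the log exactly those fixed-format words that fail to satisfy $\varphi$ — but that log could be exponentially large, so instead I would use the complement idea of Lemma~\ref{lemma-precision1}: build a small deterministic gadget recognizing the ``bad'' assignments as a prefix-tree only when that tree is small, or more simply, pick $m$ small and the log a single carefully chosen trace so that a run is far from it iff it took at least one ``satisfying'' branch. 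I would also double-check that restricting to $1$-safe nets (or nets of fixed bound) still supports the reduction, since the lemma is stated for Petri nets in general but the hardness should already hold for safe nets; this keeps the lower bound meaningful alongside the upper bound. Finally, I would note that the matching upper bound NP and this hardness together give NP-completeness, and remark that the unary encoding of $n$ is exactly what is needed to keep the certificate in the NP-membership argument polynomial.
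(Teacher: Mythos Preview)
Your NP-membership argument is essentially the same as the paper's and is fine.

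For NP-hardness, however, you are making life much harder than necessary, and as a result your argument is only a sketch with acknowledged gaps (the ``calibration'' of \(L\) and \(m\) is never actually carried out). The paper sidesteps all of this with a one-line reduction: take \(L \eqdef \emptyset\) and \(m \eqdef 0\). By the convention \(\dist(\gamma,\emptyset)=1\), \emph{every} full run is a \(0\)-anti-alignment, so the question ``is there a \(0\)-anti-alignment of length \(\leq n\)?'' collapses to ``is there a full run of length \(\leq n\)?'', i.e.\ bounded reachability of \(M_f\). One then reduces from reachability in \(1\)-safe \emph{acyclic} Petri nets, which is NP-complete \cite{DBLP:journals/fuin/Stewart95,DBLP:journals/tcs/ChengEP95}; acyclicity guarantees every firing sequence has length \(\leq |T|\), so setting \(n \eqdef |T|\) (unary, polynomial) finishes the reduction.

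Compared with this, your SAT/Hamiltonian route tries to make the distance constraint and the log do real combinatorial work, which forces you to engineer a log whose proximity encodes ``unsatisfying assignment''. That is exactly the obstacle you flag, and your proposed fixes (a prefix-tree of bad assignments, or ``pick \(m\) small and a single carefully chosen trace'') are not carried through; the first risks an exponential log, and the second is not obviously sound. The paper's trick of trivializing the distance side (\(L=\emptyset\), \(m=0\)) and pushing all the hardness into reachability is both simpler and already yields hardness for \(1\)-safe nets, addressing the concern you raise at the end of your proposal.
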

\begin{proof}
  The problem is clearly in NP: checking that a run \(\gamma\) is a
  \(m\)-anti-alignment of \(N\) w.r.t.\ \(L\) takes polynomial time (remember
  that we consider distance functions computable in polynomial time).

  For NP-hardness, we propose a reduction from the problem of reachability of a
  marking \(M_f\) in a 1-safe acyclic\footnote{a Petri net is acyclic if the
    transitive closure \(\mathcal{F}^+\) of its flow relation is irreflexive.}
  Petri net \(N\), known to be NP-complete
  \cite{DBLP:journals/fuin/Stewart95,DBLP:journals/tcs/ChengEP95}.
  Notice that, since \(N\) is acyclic, each transition can fire only once;
  hence, the length of the firing sequences of \(N\) is bounded by the number of
  transitions \(|T|\). Finally, \(M_f\) is reachable in \(N\) iff there exists a
  \(\gamma\) of length less or equal to \(|T|\) which is a \(0\)-anti-alignment
  of \(N\) (with \(M_f\) as final marking) w.r.t.\ the empty log.
  \qed
\end{proof}

\section{SAT-encoding of Anti-Alignments}
\label{sec:encoding}

In this section, we give hints on how SAT solvers can help to find anti-alignments.
We detail the construction of a SAT formula \(\Phi^n_m(N,L)\), where \(N\) is a
Petri net, \(L\) a log, \(n\) and \(m\) two integers. This formula will be used
in the search of anti-alignments of \(N\) w.r.t.\ \(L\) for Hamming distance (see Section~\ref{sec:sat_edit_dist} for 
the encoding using the Levenshtein distance).
The formula \(\Phi^n_m(N,L)\) characterizes precisely the full runs
\(\gamma = \lambda(t_1) \dots \lambda(t_n) \in \Language(N)\)
of \(N\) of length \(n\) which differs in at least \(m\) positions
with every log trace in \(L\).

\subsection{Coding \boldmath\(\Phi^n_m(N,L)\) Using Boolean Variables}
\label{sec-SAT}
The formula \(\Phi^n_m(N,L)\) is coded using the following Boolean variables:
\begin{itemize}
\item \(\tau_{i, t}\) for \(i = 1 \dots n\), \(t \in T\)
  (remind that \(w\) is the special symbol used to pad the log traces, see
  Definition~\ref{def:Hamming_dist})
  means that transition \(t_i = t\).
\item \(m_{i, p}\) for \(i = 0 \dots n\), \(p \in P\) means that place \(p\)
  is marked in marking \(M_i\) (remind that we consider only safe nets,
  therefore the \(m_{i, p}\) are Boolean variables).
\item \(\delta_{i, \sigma, k}\) for \(i = 1 \dots n\), \(\sigma \in L\), \(k = 1, \dots,
  m\) means that the \(k\)\textsuperscript{th} mismatch with the observed trace
  \(\sigma\) is at position \(i\).
\end{itemize}
The total number of variables is \(O(n \times (|T| + |P| + |L| \times m))\).
\medskip

Let us decompose the formula \(\Phi^n_m(N,L)\).
\begin{itemize}
\item The fact that \(\gamma = \lambda(t_1) \dots \lambda(t_n) \in
  \Language(N)\) is coded by the conjunction of the following formulas:
  \begin{itemize}
  \item Initial marking:
    \[
    \left(\bigwedge_{p \in M_0} m_{0, p}\right) \land
    \left(\bigwedge_{p \in P \setminus M_0} \lnot m_{0, p}\right)\]
  \item Final marking:
    \[
    \left(\bigwedge_{p \in M_f} m_{n, p}\right) \land
    \left(\bigwedge_{p \in P \setminus M_f} \lnot m_{n, p}\right)\]
  \item One and only one \(t_i\) for each \(i\):
    \[
    \bigwedge_{i = 1}^n \bigvee_{t \in T} (\tau_{i, t} \land
    \bigwedge_{t' \in T} \lnot \tau_{i, t'})\]
  \item The transitions are enabled when they fire:
    \[
    \bigwedge_{i = 1}^n \bigwedge_{t \in T}
    (\tau_{i, t} \implies \bigwedge_{p \in \preset{t}} m_{{i-1}, p})\]
  \item Token game (for safe Petri nets):
    \begin{eqnarray*}
      &&
      \bigwedge_{i = 1}^n \bigwedge_{t \in T}
      \bigwedge_{p \in \postset{t}}
      (\tau_{i, t} \implies m_{i, p})
      \\&&
      \bigwedge_{i = 1}^n \bigwedge_{t \in T}
      \bigwedge_{p \in \preset{t} \setminus \postset{t}}
      (\tau_{i, t} \implies \lnot m_{i, p})
      \\&&
      \bigwedge_{i = 1}^n \bigwedge_{t \in T}
      \bigwedge_{p \in P, p \not\in \preset{t}, p \not\in \postset{t}}
      (\tau_{i, t} \implies (m_{i, p} \iff m_{{i-1}, p}))
    \end{eqnarray*}
  \end{itemize}
  \item Now, the constraint that \(\gamma\) deviates from the observed traces
    (for every \(\sigma \in L\), \(\dist(\gamma, \sigma) \geq m\)) is coded as:
    \[\bigwedge_{\sigma \in L}\bigwedge_{k = 1}^m
    \bigvee_{i = 1}^n \delta_{i, \sigma, k}\]
    with the \(\delta_{i, \sigma, k}\) correctly affected
    w.r.t.\ \(\lambda(t_i)\) and \(\sigma_i\):
    \[\bigwedge_{\sigma \in L}\bigwedge_{k = 1}^m
    \bigwedge_{i = 1}^n \Big(\delta_{i, \sigma, k} \iff \bigvee_{t \in T,\
      \lambda(t) \ne \sigma_i} \tau_{i, t}\Big)\]
    and that for \(k \neq k'\), the \(k\)\textsuperscript{th} and
    \(k'\)\textsuperscript{th} mismatch correspond to different \(i\)'s (i.e.\ a
    given mismatch cannot serve twice):
    \[\bigwedge_{\sigma \in L} \bigwedge_{i = 1}^n
    \bigwedge_{k = 1}^{m-1} \bigwedge_{k' = k + 1}^m
    \lnot(\delta_{i, \sigma, k} \land \delta_{i, \sigma, k'})\]
\end{itemize}

\subsection{Size of the Formula}
In the end, the first part of the formula
(\(\gamma = \lambda(t_1) \dots \lambda(t_n) \in \Language(N)\)) is coded by a
Boolean formula of size \(O(n \times |T| \times |N|)\), with \(|N| \eqdef |T| +
|P|\).

The second part of the formula (for every \(\sigma \in L\), \(\dist(\gamma,
\sigma) \geq m\)) is coded by a Boolean formula of size
\(O(n \times m^2 \times |L| \times |T|)\).

The total size for the coding of the formula \(\Phi^n_m(N,L)\) is
\[O\left(n \times |T| \times \big(|N| + m^2 \times |L|\big)\right)\,.\]

\subsection{SAT-encoding of Anti-Alignments for Levenshtein's Edit Distance}
\label{sec:sat_edit_dist}

Our SAT-encoding of anti-alignments for Levenshtein's edit distance uses the
same boolean variables as the SAT-encoding of anti-alignments for Hamming
distance of the previous section, completed with $\delta$ variables used to
encode the edit distance.

Our encoding is based on the same relations that are used by the classical
dynamic programming recursive algorithm for computing the edit distance between
two words $u=\sequence{u_1, \dots, u_{n}}$ and $v=\sequence{v_1, \dots, v_{m}}$:
$$
 \left\{
 \begin{array}{ll}
   \dist(\sequence{u_1, \dots, u_{i}}, \epsilon) = i \\
   \dist(\epsilon,\sequence{ v_1, \dots, v_{j}}) = j \\
   \dist(\sequence{u_1, \dots, u_{i+1}}, \sequence{v_1, \dots, v_{j+1}}) = \\\qquad\left\{
   \begin{array}{l @{\quad\mbox{ if }} l}
     \dist(\sequence{u_1, \dots, u_i}, \sequence{v_1, \dots, v_j})
     & u_{i+1} = v_{j+1} \\
     \begin{array}{l@{}l}
       1 + \min(
       &\dist(\sequence{u_1, \dots, u_{i+1}}, \sequence{v_1, \dots, v_j}),\\
       &\dist(\sequence{u_1, \dots, u_i}, \sequence{v_1, \dots, v_{j+1}}))
     \end{array}
     & u_{i+1} \not = v_{j+1}
   \end{array}\right.
 \end{array}
 \right.
 $$



We encode this computation in a SAT formula \(\phi\) over variables
$\delta_{i,j,d}$, for \(i = 0, \dots, n\), \(j = 0, \dots, m\) and \(d = 0,
\dots, n+m\).
Formula \(\phi\) will have exactly one solution, in which each variable
$\delta_{i,j,d}$ is \(\texttt{true}\) iff \(u_1 \dots u_i\) and \(v_1\dots v_j\)
differ by at least $d$ editions.

In order to test equality between the \(u_{i}\) and \(v_{j}\), we use variables
$\lambda_{i,a}$ and $\lambda'_{j,a}$, for \(i = 0, \dots, n\), \(j =
0, \dots, m\) and \(a \in \Sigma\), and we set their value such that
\(\lambda_{i,a}\) is \(\texttt{true}\) iff \(u_i = a\), and \(\lambda'_{j,a}\) is
\(\texttt{true}\) iff \(v_j = a\). Hence, the test $u_{i+1} = v_{j+1}$ becomes in our
formulas: \(\bigvee_{a \in \Sigma} (\lambda_{i+1,a} \wedge \lambda'_{j+1,a})\).
For readability of the formulas, we refer to this coding by
\([u_{i+1} = v_{j+1}]\). We also write similarly \([u_{i+1} \neq v_{j+1}]\).

In the following, we describe the different clauses of the formula \(\phi\) of our SAT encoding of the edit distance.
\begin{eqnarray}
  &&\textstyle\delta_{0,0,0} \quad\land\quad \bigwedge_{ d > 0 } \neg\delta_{0,0,d}\\
  &&\textstyle\bigwedge_{ d  }  \bigwedge_{ i = 0 }^n \quad (\delta_{i+1,0,d+1} \Leftrightarrow \delta_{i,0,d})\\
  &&\textstyle\bigwedge_{ d  }  \bigwedge_{ j = 0 }^n \quad  (\delta_{0,j+1,d+1} \Leftrightarrow  \delta_{0,j,d})\\
  &&\textstyle\bigwedge\limits_{ d  }  \bigwedge\limits_{ i = 0 }^n \bigwedge\limits_{ j = 0 }^n \quad
  [u_{i+1} = v_{j+1}] \Rightarrow (\delta_{i+1,j+1,d} \Leftrightarrow \delta_{i,j,d})\\
  &&\textstyle\bigwedge\limits_{ d  }  \bigwedge\limits_{ i = 0 }^n \bigwedge\limits_{ j = 0 }^n \quad
  [u_{i+1} \neq v_{j+1}] \Rightarrow (\delta_{i+1,j+1,d+1} \Leftrightarrow (\delta_{i+1,j,d} \wedge \delta_{i,j+1,d}))\qquad
\end{eqnarray}



\begin{example}
At instants $i=1$ and $j=1$ of words $u=\sequence{s,g,c}$ and
$v=\sequence{s,b,c,a}$, the letters are the same, then, by (4), the distance is
only higher or equal to 0 : $(u_1=v_1) \Rightarrow (\delta_{1,1,0}
\Leftrightarrow \delta_{0,0,0})$.

However at instants $i=2$ and $j=2$, the letters $u_2$ and $v_2$ are different. A step before,  $\delta_{1,2,1}$ and $\delta_{2,1,1}$ are \texttt{true} because of the length of the subwords. Then, by (5), the distance at instants $i=2$ and $j=2$ is higher or equal to 2 : $\delta_{2,2,2}$. The result is understandable because the edit distance costs the deletion of $g$ and the addition of $b$ to transform $u$ to $v$.
\end{example}

In order to insert this encoding of Leventshein's edit distance into our
formulas for anti-alignments, we need to compute the edit distance between the
expected anti-alignment and every trace of the log, which requires to use variables
$\delta_{i,\sigma,j,k}$ for \(i = 0 \dots n\), \(\sigma \in L\), \(j = 0 \dots
|\sigma|\), \(k = 0, \dots, \max(n, |\sigma|)\) to represent the fact that
$u_1\dots u_i$ and $v_1\dots v_j$ differ by at least $k$ editions.

\subsection{Solving the Formula in Practice}
In practice, 
the coding of the formula
\(\Phi^n_m(N,L)\) can be done using the Boolean variables \(\tau_{i, t}\), \(m_{i, p}\)
and \(\delta_{i, \sigma, k}\).

Then we need to transform the formula in conjunctive normal form (CNF) in order
to pass it to the SAT solver. We use Tseytin's transformation
\cite{Tseytin} to get a formula in conjunctive normal form (CNF) whose size is
linear in the size of the original formula. The idea of this transformation is
to replace recursively the disjunctions \(\phi_1 \lor \dots \lor \phi_n\) (where
the \(\phi_i\) are not atoms) by the following equivalent formula:
\[
\exists x_1, \dots, x_n \quad \left\{
\begin{array}{ll}
  & x_1 \lor \dots \lor x_n\\\land
  & x_1 \implies \phi_1\\\land
  & \dots\\\land
  & x_n \implies \phi_n
\end{array}\right.\]
where \(x_1, \dots, x_n\) are fresh variables.

In the end, the SAT solver
tells us if there exists a run \(\gamma = \lambda(t_1) \dots \lambda(t_n) \in \Language(N)\)
which differs by at least \(m\) editions with every observed trace \(\sigma \in
L\). If a solution is found, we extract the run \(\gamma\) using the values
assigned by the SAT solver
to the Boolean variables \(\tau_{i, t}\).

\section{Using Anti-Alignments to Estimate Precision}
\label{sec:precision}

In this section we show how to use anti-alignments to estimate
precision of process models.
Remarkably, we show how to modify the definitions of~\cite{DBLP:conf/apn/ChatainC16,DBLP:conf/bpm/DongenCC16}
so that the new metric does not depend on a predefined length.
In Section~\ref{sec:axioms} we dive into the adherence of the metric
with respect to a recent proposal for properties of precision
metrics~\cite{TAX20181}.

\subsection{Precision}
\label{sec:log_based}
Our precision metric is an adaptation of our previous versions presented
in~\cite{DBLP:conf/apn/ChatainC16,DBLP:conf/bpm/DongenCC16}. It relies on
anti-alignments to find the model run that is as distant as possible to the log
traces. Like anti-alignments, the definition of precision is parameterized by a
distance \(\dist\). In the examples, we will specify each time if we use
Levenshtein's edit distance (Definition~\ref{def:Levenshtein}) or
Hamming distance (Definition~\ref{def:Hamming_dist}).

\begin{definition}[Precision]\label{def:Log-precision}
	Let $L$ be an event log and $N$ a model. We define precision as follows:
	$$P_{aa}(N,L) \eqdef 1 - \sup_{\gamma \in \Language(N)}\dist(\gamma,L)\,.$$
\end{definition}

For instance, consider the model and log shown in
Figure~\ref{fig:anti-alignment-ex}. With Levenshtein's distance, the full run
$\sequence{a,b,c,f,i,k}$ is a maximal anti-alignment. It is at distance
\(\frac{3}{13}\) to any of the log traces, and hence $P_{aa}(N,L) = 1 -
\frac{3}{13} = 0.77$.

\subsubsection{Handling Process Models with Loops}
\label{sec:loops}

Notice that a model with arbitrary long runs (i.e., a process model that contains loops) may cause the formula in Definition~\ref{def:Log-precision} to
converge to 0. This is a natural artifact of comparing a finite language (the event log), with a possibly infinite
language (the process model). Since process models in reality contain loops, an adaptation of the metric is done in this section, so that
it can also handle this type of models without penalizing severely the loops.

\begin{definition}[Precision for Models with Loops
  ]\label{def:Log-precision-loops}
  Let $L$ be an event log and $N$ a model. We define \(\epsilon\)-precision
  as follows:
  \[P_{aa}^{\epsilon}(N,L) \eqdef 1 - \sup_{\gamma \in \Language(N)}\frac{\dist(\gamma,
    L)}{(1+\epsilon)^{|\gamma|}}\]
  with some \(\epsilon \geq 0\) which is a parameter of this definition.
\end{definition}
Informally, the formula computes the anti-alignment that provides maximal distance with any trace in the log, and
at the same time tries to minimize its length. The penalization for the length is parametrized over the $\epsilon$\footnote{Although, admittedly, $\epsilon$
is a parameter that should be decided apriori, in practice one can use a particular value to this parameter thorough several instances, without impacting significantly the insights obtained through this metric.}.
Observe that \(P_{aa}^0(N,L)\) is precisely the precision \(P_{aa}(N,L)\) of Definition~\ref{def:Log-precision}.
By making Definition~\ref{def:Log-precision-loops} not dependant on a predefined length, it deviates from the log-based
precision metrics defined in previous work~\cite{DBLP:conf/apn/ChatainC16,DBLP:conf/bpm/DongenCC16}.

\begin{figure}[t]
\centering
\begin{minipage}[t]{0.5\linewidth}
\vspace{0pt}
\includegraphics[width=.95\textwidth]{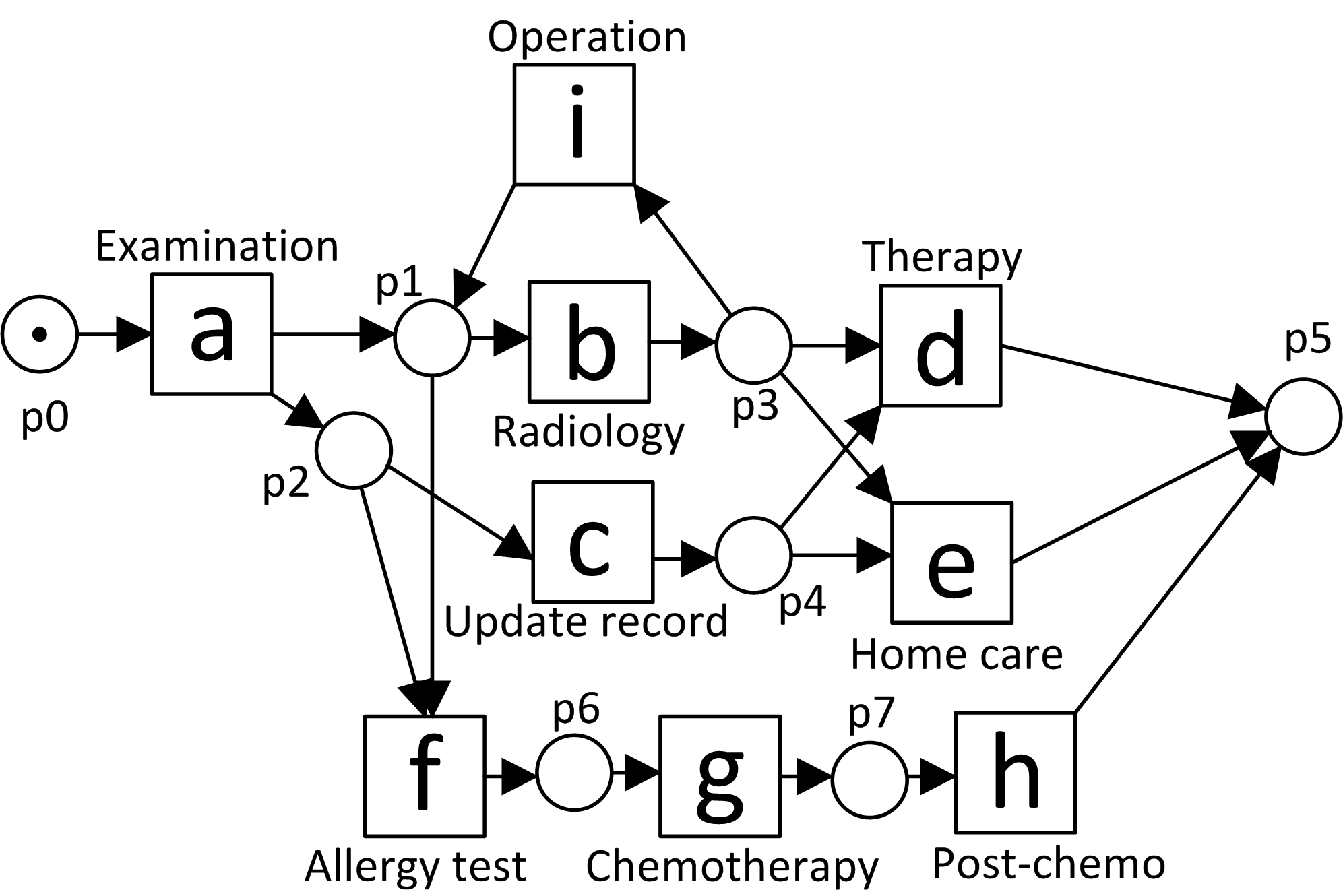}
\end{minipage}%
\caption{Example from~\cite{AdriansyahMCDA15}}
\label{fig:example_etcp}
\end{figure}

Let us now consider the model of Figure~\ref{fig:example_etcp},
and the log \mbox{$L=[\sequence{a},\sequence{a,b,c,d}, \sequence{a,f,g,h}, \sequence{a,b,i,b,c,d}]$}.
Assume that $\epsilon = 0.05$.
A possible anti-alignment is $\gamma_1 = \sequence{a,c,b,e}$ which is at least
at Levenshtein's distance $\frac12$ to any of the log traces.
For $\gamma_1$ the value of the formula is $1 - \frac{\frac12}{(1.05)^{4}} = 0.589$.
Another possible anti-alignment is $\gamma_2 = \sequence{a,c,b,i,b,i,b,i,b,i,b,e}$
which is at least at distance $\frac{10}{18}$ to any of the log traces.
For $\gamma_2$ the value of the formula is $1 -
\frac{\frac{10}{18}}{(1.05)^{12}} = 0.691$. Hence, since the anti-alignment that
maximizes the second term of the formula is $\gamma_1$, the precision computed
is $P_{aa}^{0.05}(N,L) = 0.589$. If instead, $\epsilon$ is set to a lower value,
e.g., $\epsilon = 0.02$, the corresponding value of the formula for
the anti-alignment $\sequence{a,c,b,i,b,i,b,i,b,i,b,i,b,i,b,i,b,e}$ will
be the mainimal, and therefore it will be selected as the anti-alignment
resulting in $P_{aa}^{0.02}(N,L) = 0.533$.

\subsubsection{Computing \(P_{aa}^{\epsilon}\)}

By incorporating the $\epsilon$ parameter in the definition of precision, now the metric can deal with models containing loops without predefining the length of the anti-alignment. In this section we show that the proposed extension is well-defined and can be computed, and provide some complexity results of the algorithms involved.

\begin{lemma}
  \label{lem:sup_epsilon_reached}
  For every finite model \(N\), log \(L\) and \(\epsilon > 0\), the
  supremum in the definition of \(P_{aa}^{\epsilon}\) is reached,
  i.e.\ there exists a full run \(\gamma \in \Language(N)\) such that
  \(P_{aa}^{\epsilon}(N,L) = 1 - \frac{\dist(\gamma, L)}{(1+\epsilon)^{|\gamma|}}\).
\end{lemma}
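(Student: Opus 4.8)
The plan is to show that the quantity $\frac{\dist(\gamma,L)}{(1+\epsilon)^{|\gamma|}}$, as $\gamma$ ranges over $\Language(N)$, attains its supremum, by arguing that only finitely many values of $|\gamma|$ can possibly realize it, and for each fixed length there are only finitely many runs. First I would recall that $\dist$ takes values in $[0,1]$, so for every run $\gamma$ we have
\[
\frac{\dist(\gamma, L)}{(1+\epsilon)^{|\gamma|}} \;\leq\; \frac{1}{(1+\epsilon)^{|\gamma|}}\,.
\]
Since $\epsilon > 0$, the right-hand side tends to $0$ as $|\gamma| \to \infty$; hence there exists a length bound $n_0 \in \mathbb{N}$ such that for every run $\gamma$ with $|\gamma| > n_0$ we have $\frac{\dist(\gamma,L)}{(1+\epsilon)^{|\gamma|}} < \frac{1}{(1+\epsilon)^{n_0}}$, and in particular strictly less than the value $\frac{\dist(\gamma_0,L)}{(1+\epsilon)^{|\gamma_0|}}$ obtained by any fixed short run $\gamma_0$ witnessing $\Language(N)\neq\emptyset$ (if $\Language(N)=\emptyset$, the supremum is over the empty set, which by the convention $\dist(\gamma,\emptyset)=1$ question does not arise — more precisely one handles that degenerate case separately, taking the sup to be $0$ or noting the statement is vacuous). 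Concretely one may take $n_0$ large enough that $(1+\epsilon)^{n_0} > (1+\epsilon)^{|\gamma_0|}/\dist(\gamma_0,L)$ when the denominator is nonzero, and if $\dist(\gamma_0,L)=0$ for all runs then the supremum is $0$ and is trivially attained by any run.

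The next step is to restrict attention to runs of length at most $n_0$. Since $N$ is a finite Petri net, for each fixed $n \leq n_0$ there are only finitely many firing sequences of length $n$ (each step chooses one of $|T|$ transitions), hence only finitely many full runs $\gamma \in \Language(N)$ with $|\gamma| \leq n_0$. The function $\gamma \mapsto \frac{\dist(\gamma,L)}{(1+\epsilon)^{|\gamma|}}$ therefore takes only finitely many values on this set, so it attains its maximum there, say at $\gamma^\star$. Combining with the previous paragraph, $\frac{\dist(\gamma^\star,L)}{(1+\epsilon)^{|\gamma^\star|}} \geq \frac{\dist(\gamma,L)}{(1+\epsilon)^{|\gamma|}}$ for every $\gamma \in \Language(N)$ (for short runs by maximality, for long runs by the length bound together with the fact that $\gamma^\star$ does at least as well as the reference run $\gamma_0$), whence the supremum equals $\frac{\dist(\gamma^\star,L)}{(1+\epsilon)^{|\gamma^\star|}}$ and $P_{aa}^{\epsilon}(N,L) = 1 - \frac{\dist(\gamma^\star,L)}{(1+\epsilon)^{|\gamma^\star|}}$ as required.

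The only mild subtlety — and the thing I would be careful to get right in the write-up — is the bookkeeping around the reference run $\gamma_0$ and the genuinely degenerate cases: when $\Language(N) = \emptyset$ the supremum is empty, and when $\dist(\gamma,L) = 0$ for all runs (e.g.\ $N$ fits $L$ exactly in the sense $\Language(N) \subseteq L$) the objective is identically $0$. In both situations the supremum is attained trivially, and otherwise the argument above pins down a finite search space. No deep machinery is needed: the whole proof rests on $\dist \leq 1$, $\epsilon > 0$ forcing exponential decay in the length, and finiteness of the net forcing finitely many runs of each bounded length.
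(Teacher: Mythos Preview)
Your proposal is correct and follows essentially the same route as the paper: split off the degenerate case where the objective is identically zero, otherwise fix a reference run $\gamma_0$ with positive value, use $\dist \leq 1$ together with the exponential decay $(1+\epsilon)^{-|\gamma|}$ to bound the length of any candidate maximizer, and then invoke finiteness of the net to reduce to a finite search. The paper makes the length bound explicit as $n = \lfloor -\log m / \log(1+\epsilon) \rfloor$ with $m = \dist(\gamma_0,L)/(1+\epsilon)^{|\gamma_0|}$, but the underlying argument is identical; if anything, your case split on ``all runs have distance zero'' versus ``some run has positive distance'' is slightly cleaner than the paper's split on $\Language(N) \subseteq L$, since the general assumptions on $\dist$ do not guarantee that $\gamma_0 \notin L$ forces $\dist(\gamma_0,L) > 0$.
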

\begin{proof}
  Two cases have to be distinguished: if \(\Language(N) \subseteq L\), then the
  supremum equals \(0\), is obviously reached by any \(\gamma \in
  \Language(N)\), and \(P_{aa}^{\epsilon}(N,L) = 1\); otherwise,
  let \(\gamma_0 \in \Language(N) \setminus L\) and
  let \(m \eqdef \frac{\dist(\gamma_0, L)}{(1+\epsilon)^{|\gamma_0|}}\);
  we show that the supremum in the definition of \(P_{aa}^{\epsilon}\) becomes now a
  maximum over a finite set of runs, bounded by a given length \(n\) that
  depends on \(m\) and \(\epsilon\):
  \[
  \sup_{\gamma \in \Language(N)}
  \frac{\dist(\gamma, L)}{(1+\epsilon)^{|\gamma|}}
  =
  \max_{\gamma \in \Language(N), |\gamma|\leq n}
  \frac{\dist(\gamma, L)}{(1+\epsilon)^{|\gamma|}}
  \,,\]
  with \(n \eqdef \left\lfloor\frac{-\log m}{\log(1+\epsilon)}\right\rfloor\).
  Indeed, for every \(\gamma\) strictly longer than \(n\), we have
  \(\frac{\dist(\gamma, L)}{(1+\epsilon)^{|\gamma|}} <
  \frac1{(1+\epsilon)^n} = \exp(-n \cdot \log(1+\epsilon)) \leq
  \exp(\log m) = m\), which also shows that \(|\gamma_0| \leq n\). Hence
  \(\gamma_0\) is considered in our \(\max\), and then
  \(
  \max\limits_{\gamma \in \Language(N), |\gamma|\leq n}
  \frac{\dist(\gamma, L)}{(1+\epsilon)^{|\gamma|}}
  \geq m >
  \sup\limits_{\gamma \in \Language(N), |\gamma| > n}
  \frac{\dist(\gamma, L)}{(1+\epsilon)^{|\gamma|}}
  \).
  \qed
\end{proof}
Lemma~\ref{lem:sup_epsilon_reached} gives us the key for an algorithm to compute
\(P_{aa}^{\epsilon}\).

\begin{algorithm}
  \label{algo_precision}
  Algorithm for computing \(P_{aa}^{\epsilon}(N,L)\):
  \begin{itemize}
  \item if \(\Language(N) \not\subseteq L\),
    then
    \begin{itemize}
    \item select \(\gamma_0 \in \Language(N) \setminus L\)
    \item let \(m \eqdef \frac{\dist(\gamma_0, L)}{(1+\epsilon)^{|\gamma_0|}}\)
    \item explore the reachability graph of \(N\) until depth
      \(n \eqdef \left\lfloor\frac{-\log m}{\log(1+\epsilon)}\right\rfloor\) and return
      \(1 - \max_{\gamma \in \Language(N), |\gamma|\leq n}
      \frac{\dist(\gamma, L)}{(1+\epsilon)^{|\gamma|}}\);
    \end{itemize}
  \item else return \(1\) (the model has perfect precision).
  \end{itemize}
\end{algorithm}
The correctness of this algorithm follows directly from
Lemma~\ref{lem:sup_epsilon_reached}.
Its complexity resides essentially in the initial test, which corresponds to
simply deciding if \(P_{aa}^{\epsilon}(N,L) < 1\), whose complexity is given by
the following lemma:

\begin{lemma}
  \label{lem:complexity_precision_lt_1}
  The problem of deciding, for a finite model \(N\) and a log \(L\), if
  \(P_{aa}^{\epsilon}(N,L) < 1\), is equivalent to deciding reachability in Petri
  nets.
\end{lemma}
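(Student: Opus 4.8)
The plan is to reduce this statement directly to Lemma~\ref{lemma-precision1}, which already shows that deciding the existence of a full run \(\gamma \in \Language(N)\) with \(\dist(\gamma, L) > 0\) is equivalent to reachability in Petri nets. So it suffices to establish that, for any fixed \(\epsilon \geq 0\), the predicate ``\(P_{aa}^{\epsilon}(N,L) < 1\)'' coincides with that existence statement, and then invoke Lemma~\ref{lemma-precision1} for both directions of the reduction.

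First I would unfold the definition: \(P_{aa}^{\epsilon}(N,L) < 1\) holds iff \(\sup_{\gamma \in \Language(N)} \frac{\dist(\gamma, L)}{(1+\epsilon)^{|\gamma|}} > 0\). Each term of this supremum is the quotient of a number \(\dist(\gamma, L) \in [0,1]\) by the strictly positive number \((1+\epsilon)^{|\gamma|} \geq 1\); hence it is nonnegative, and it is strictly positive exactly when \(\dist(\gamma, L) > 0\). A supremum of a family of nonnegative reals is strictly positive if and only if at least one member of the family is strictly positive: if all members are \(0\), or the family is empty (which happens precisely when \(M_f\) is unreachable from \(M_0\), and then \(\Language(N) = \emptyset \subseteq L\)), the supremum is \(0\), matching the convention under which \(P_{aa}^{\epsilon}(N,L) = 1\) in that degenerate case; conversely a single strictly positive member forces the supremum above it. Therefore \(P_{aa}^{\epsilon}(N,L) < 1\) iff there exists \(\gamma \in \Language(N)\) with \(\dist(\gamma, L) > 0\).

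Combining this equivalence with Lemma~\ref{lemma-precision1} yields both directions. An oracle for Petri net reachability decides \(P_{aa}^{\epsilon}(N,L) < 1\) through the synchronous-product/prefix-tree construction of Figure~\ref{fig:tree-L}; conversely, reachability reduces to testing \(P_{aa}^{\epsilon}(N, \emptyset) < 1\), since \(\dist(\gamma, \emptyset) = 1 > 0\) for every \(\gamma\), so \(\Language(N) \neq \emptyset\) (i.e.\ \(M_f\) reachable) iff \(P_{aa}^{\epsilon}(N, \emptyset) < 1\).

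I do not expect any genuinely hard step: the whole content is that the \((1+\epsilon)^{|\gamma|}\) weighting, being finite and strictly positive, never converts a positive distance contribution into a zero one or vice versa, so the \(\epsilon\)-penalty is irrelevant to the \emph{sign} of the supremum even though it affects its \emph{value}. The only points requiring a little care are the boundary cases — an empty language and the behaviour of the supremum — which is why I would make the convention \(P_{aa}^{\epsilon}(N,L) = 1\) for \(\Language(N) \subseteq L\) explicit in the argument.
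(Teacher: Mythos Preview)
Your proposal is correct and follows essentially the same approach as the paper: both reduce the question to Lemma~\ref{lemma-precision1} via the observation that \(P_{aa}^{\epsilon}(N,L) < 1\) iff \(\Language(N) \not\subseteq L\). The paper states this equivalence in a single sentence, whereas you spell out why the strictly positive weight \((1+\epsilon)^{|\gamma|}\) cannot affect the sign of the supremum and handle the empty-language boundary case explicitly; the underlying argument is the same.
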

\begin{proof}
  We simply observe that \(P_{aa}^{\epsilon}(N,L) < 1\) iff \(\Language(N)
  \not\subseteq L\). Deciding this is equivalent to deciding
  reachability in Petri nets, as showed in Lemma~\ref{lemma-precision1}.
  \qed
\end{proof}

However, in practice, one would generally skip the first check and jump directly
to the exploration until some depth \(n\), possibly computed form a given
threshold \(m\), like the one given by the \(\gamma_0\) in
Algorithm~\ref{algo_precision}.
Notice that the algorithm explores less deep (i.e.\ \(n\) is smaller) when
\(m\) is large (close to 1), i.e.\ \(\gamma_0\) is close to the optimal
anti-alignment. We can summarize this with the following variation of
Algorithm~\ref{algo_precision}:

\begin{algorithm}
  \label{algo_precision_threshold}
  Algorithm for estimating \(P_{aa}^{\epsilon}(N,L)\) using a threshold \(0 < m
  \leq 1\) as input:
  \begin{itemize}
  \item explore the reachability graph of \(N\) until depth
    \(n \eqdef \left\lfloor\frac{-\log m}{\log(1+\epsilon)}\right\rfloor\)
  \item if the exploration finds a full run \(\gamma \in \Language(N) \setminus L\)
  \item then output ``\(P_{aa}^{\epsilon}(N,L) =
    1 - \max_{\gamma \in \Language(N), |\gamma|\leq n}
    \frac{\dist(\gamma, L)}{(1+\epsilon)^{|\gamma|}}\)''
  \item else output ``\(P_{aa}^{\epsilon}(N,L) \geq 1 - m\)''.
  \end{itemize}
\end{algorithm}

\begin{lemma}
  \label{lem:complexity_precision_threshold}
  For any fixed \(\epsilon > 0\), the problem of deciding, for a finite model
  \(N\), a log \(L\) and a rational constant \(m > 0\), if
  \(P_{aa}^{\epsilon}(N,L) < 1 - m\),
  is NP-complete.
\end{lemma}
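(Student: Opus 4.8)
The plan is to prove membership in NP and NP-hardness separately, reusing in both parts the length bound already exploited in Lemma~\ref{lem:sup_epsilon_reached}; the only new twist is that the threshold $m$ is now given as part of the input rather than derived from some run. For NP membership I would first observe that, since $\dist$ is $[0,1]$-valued, $P_{aa}^{\epsilon}(N,L) < 1-m$ holds iff $\sup_{\gamma \in \Language(N)} \frac{\dist(\gamma,L)}{(1+\epsilon)^{|\gamma|}} > m$, and --- arguing exactly as in the proof of Lemma~\ref{lem:sup_epsilon_reached} --- iff there is a full run $\gamma \in \Language(N)$ with $|\gamma| \leq n$ and $\dist(\gamma,L) > m\cdot(1+\epsilon)^{|\gamma|}$, where $n \eqdef \lfloor \frac{-\log m}{\log(1+\epsilon)} \rfloor$; indeed any $\gamma$ with $|\gamma| > n$ already satisfies $\frac{\dist(\gamma,L)}{(1+\epsilon)^{|\gamma|}} \leq (1+\epsilon)^{-(n+1)} < m$, so a witness, when one exists, can be taken of length at most $n$. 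The decisive quantitative fact is that $n$ is polynomial in the instance size: $\epsilon$ is fixed so $\frac{1}{\log(1+\epsilon)}$ is a constant, and $-\log m$ is at most the bit-length of the rational $m$. A run $\gamma$ of length $\leq n$ is therefore a polynomial-size certificate, verifying that it is a full run of $N$ via the token game takes polynomial time, and computing $\min_{\sigma\in L}\dist(\gamma,\sigma)$ and comparing it with the rational $m(1+\epsilon)^{|\gamma|}$ takes polynomial time because $\dist$ is polynomial-time computable. Hence the problem is in NP.

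For NP-hardness I would reduce from reachability of a marking $M_f$ in a $1$-safe acyclic Petri net, which is NP-complete~\cite{DBLP:journals/fuin/Stewart95,DBLP:journals/tcs/ChengEP95} and is exactly the problem used in the proof of Lemma~\ref{lemma-complexity_anti-alignment-bounded-length}. Given such a net $N$ with $M_f$ as its final marking, I map it to the instance $(N, L, m)$ with $L \eqdef \emptyset$ and $m \eqdef (1+\epsilon)^{-(|T|+1)}$, which, $\epsilon$ being fixed, is a rational of polynomial bit-length computable in polynomial time. Since $N$ is acyclic every full run has length at most $|T|$, and $\dist(\gamma,\emptyset)=1$ by convention; so if $\Language(N)\neq\emptyset$ then $\sup_{\gamma}\frac{\dist(\gamma,\emptyset)}{(1+\epsilon)^{|\gamma|}}\geq(1+\epsilon)^{-|T|}>m$, i.e.\ $P_{aa}^{\epsilon}(N,\emptyset)<1-m$; whereas if $\Language(N)=\emptyset$ no full run witnesses the strict inequality, so $P_{aa}^{\epsilon}(N,\emptyset)=1\geq 1-m$. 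Thus $M_f$ is reachable from $M_0$ iff $P_{aa}^{\epsilon}(N,\emptyset)<1-m$, which closes the reduction.

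The main obstacle --- and really the only non-routine point --- is the polynomiality of the bound $n$ (and, symmetrically, of the threshold $m$ output by the reduction): this hinges on $\epsilon$ being a fixed positive constant, so that $\log(1+\epsilon)$ stays bounded away from $0$. If $\epsilon$ were part of the input and allowed to be, say, inverse-exponential in the instance size, then $n$ would become exponential and the NP upper bound would collapse. Everything else --- the $[0,1]$ normalisation of $\dist$, the token-game check, and the arithmetic over rationals --- is bookkeeping already present in the earlier lemmas.
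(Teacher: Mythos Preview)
Your proof is correct and follows essentially the same approach as the paper: both the NP upper bound via the length cutoff $n = \lfloor -\log m / \log(1+\epsilon)\rfloor$ from Lemma~\ref{lem:sup_epsilon_reached}, and the hardness reduction from reachability in $1$-safe acyclic nets with $L=\emptyset$ and $m=(1+\epsilon)^{-(|T|+1)}$, match the paper's argument. Your added emphasis on why $n$ is polynomial (namely that $\epsilon$ is fixed, so $1/\log(1+\epsilon)$ is constant) makes explicit a point the paper leaves implicit.
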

\begin{proof}
  The proof is similar to the one of Lemma~\ref{lem:sup_epsilon_reached}; here,
  the bound \(m\) is given directly, and we have the same equality
  \[
  \sup_{\gamma \in \Language(N)}
  \frac{\dist(\gamma, L)}{(1+\epsilon)^{|\gamma|}}
  =
  \max_{\gamma \in \Language(N), |\gamma|\leq n}
  \frac{\dist(\gamma, L)}{(1+\epsilon)^{|\gamma|}}
  \,,\]
  with \(n \eqdef \left\lfloor\frac{-\log m}{\log(1+\epsilon)}\right\rfloor\).
  This means, in order to check that \(P_{aa}^{\epsilon}(N,L) < 1 - m\), it
  suffices to guess a full run \(\gamma\) of length \(|\gamma| \leq n\), where
  \(n\) depends linearly on the size of the representation of \(m\)
  (number of bits in the numerator and denominator).
  Then one can check in polynomial time that
  \(\frac{\dist(\gamma, L)}{(1+\epsilon)^{|\gamma|}} > m\).

  For completeness, we proceed like in
  Lemma~\ref{lemma-complexity_anti-alignment-bounded-length}: we reduce
  reachability of \(M_f\) in a 1-safe acyclic Petri net \(N\) to
  \(P_{aa}^{\epsilon}(N,L) < 1 - m\) with \(L = \emptyset\) and \(m =
  \frac1{(1+\epsilon)^{|T|+1}}\).
  \qed
\end{proof}

\subsection{Discussion about Reference Properties for Precision}
\label{sec:axioms}
Recently, an effort to consolidate a set of desired properties for precision metrics has been proposed~\cite{TAX20181}. Five axioms are described
that establish different features of a precision metric \(\precision(N, L)\). Summarizing, the axioms proposed in~\cite{TAX20181} are:

\begin{itemize}
 \item $A1$: A precision metric should be a function, i.e.\ it should be deterministic.
 \item $A2$: If a process model $N_2$ allows for more behavior not seen in a log $L$ than another model $N_1$ does, then $N_2$ should have a lower precision
   than $N_1$ regarding $L$:
   \[L \subseteq \Language(N_1) \subseteq \Language(N_2) \implies \precision(N_1, L) \geq \precision(N_2, L)\]
 \item $A3$: Let $N_1$ be a model that allows for the behavior seen in a log $L$, and at the same time its behavior is properly included in a model $N_2$
   whose language is $\Sigma^*$\footnote{Actually, \cite{TAX20181} writes
     ``\(\Language(N_1) \subset \mathcal{P}(\Sigma^*)\)'', with
     \(\mathcal{P}\) for powerset, but we believe this is a mistake.} (called a flower model).
   Then the precision of $N_1$ on $L$ should be strictly greater than the one for $N_2$.
 \item $A4$: The precision of a log on two language equivalent models should be equal:
   \[\Language(N_1) = \Language(N_2) \implies \precision(N_1, L) = \precision(N_2, L)\]
 \item $A5$: Adding fitting traces to a fitting log can only increase the precision of a given model with respect to the log:
   \[L_1 \subseteq L_2 \subseteq \Language(N) \implies \precision(N, L_1) \leq \precision(N, L_2)\]
\end{itemize}
In the aforementioned paper, it is shown that the previous version of our
antialignment-based precision metric (from~\cite{DBLP:conf/bpm/DongenCC16}) does
not satisfy axiom $A5$ (the satisfaction of the rest of axioms are declared as
{\em unknowns} in the paper). With the new version of the metric
presented in this paper, we here provide proofs for these axioms, except for
\(A3\). But at the same time, we show that any precision metric can be adapted
in order to satisfy \(A3\).

\begin{lemma}
\label{lem:A1}
The metric $P_{aa}^{\epsilon}$ (for any fixed \(\epsilon\)) 
satisfies $A1$.
\end{lemma}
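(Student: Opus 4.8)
The plan is to unpack what axiom $A1$ actually demands: it asks only that $P_{aa}^{\epsilon}$ be a genuine function of the pair $(N, L)$, i.e.\ that Definition~\ref{def:Log-precision-loops} assigns a unique value to every input. So the whole argument reduces to checking that the supremum occurring in that definition is always well-defined, and that every ingredient feeding into it is deterministic.

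First I would observe that the distance $\dist$ is fixed once and for all as part of the setting, so $\dist(\gamma, \sigma)$ is uniquely determined for all words $\gamma, \sigma$; consequently $\dist(\gamma, L) = \min_{\sigma \in L}\dist(\gamma, \sigma)$, being a minimum over the finite multiset $L$ (or $1$ by the stated convention when $L = \emptyset$), is uniquely determined as well. Since $|\gamma|$ is just the length of the word $\gamma$ and $\epsilon$ is a fixed parameter, the quantity $\dist(\gamma, L)/(1+\epsilon)^{|\gamma|}$ is a well-defined real number for each $\gamma \in \Language(N)$.

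Next I would argue that the set $\{\dist(\gamma, L)/(1+\epsilon)^{|\gamma|} \mid \gamma \in \Language(N)\}$ admits a supremum: because $\dist$ takes values in $[0,1]$ and $(1+\epsilon)^{|\gamma|} \geq 1$, this set is contained in $[0,1]$, hence is a bounded subset of $\mathbb{R}$ and by completeness of $\mathbb{R}$ has a unique least upper bound. One should also fix a convention for the degenerate case $\Language(N) = \emptyset$ (take $\sup \emptyset = 0$, so that the model has perfect precision), which is consistent with the ``else return $1$'' branch of Algorithm~\ref{algo_precision}. From this, $P_{aa}^{\epsilon}(N,L) = 1 - \sup(\dots)$ is a single well-defined number, which is exactly what $A1$ requires.

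I do not expect any genuine obstacle here, since $A1$ is essentially a sanity check on the definition. The only points needing care are the bookkeeping edge cases (empty log, empty language) and the implicit reliance on $\dist$ itself being a fixed deterministic function computable in polynomial time, as assumed in Section~\ref{sec:anti-alignments}; neither is difficult, but both should be stated explicitly so that the argument is airtight.
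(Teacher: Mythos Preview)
Your proposal is correct and follows essentially the same approach as the paper, which simply observes that everything in the definition of $P_{aa}^{\epsilon}$ is functional. You spell out in more detail why each ingredient (the distance, the minimum over the finite log, the supremum over a bounded subset of $\mathbb{R}$, and the edge cases) yields a unique value, but the underlying idea is identical.
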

\begin{proof}
  Everything in our definitions is functional.
\qed
\end{proof}

\begin{lemma}
\label{lem:A2}
The metric $P_{aa}^{\epsilon}$ 
satisfies $A2$.
\end{lemma}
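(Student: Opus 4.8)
The plan is to read off the result directly from the shape of Definition~\ref{def:Log-precision-loops}: the metric \(P_{aa}^{\epsilon}(N,L)\) is \(1\) minus a supremum taken over \(\Language(N)\), and the expression inside the supremum, \(\frac{\dist(\gamma, L)}{(1+\epsilon)^{|\gamma|}}\), depends only on the run \(\gamma\) and the log \(L\), never on which model \(\gamma\) happens to be a full run of. So the whole argument reduces to monotonicity of the supremum under set inclusion, followed by a sign flip.

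Concretely, first I would invoke the hypothesis \(\Language(N_1) \subseteq \Language(N_2)\) to get
\[\sup_{\gamma \in \Language(N_1)} \frac{\dist(\gamma, L)}{(1+\epsilon)^{|\gamma|}} \;\leq\; \sup_{\gamma \in \Language(N_2)} \frac{\dist(\gamma, L)}{(1+\epsilon)^{|\gamma|}}\,,\]
since the supremum of a fixed nonnegative function over a subset is at most its supremum over the superset. Then I would subtract both sides from \(1\), which reverses the inequality:
\[P_{aa}^{\epsilon}(N_1, L) = 1 - \sup_{\gamma \in \Language(N_1)} \frac{\dist(\gamma, L)}{(1+\epsilon)^{|\gamma|}} \;\geq\; 1 - \sup_{\gamma \in \Language(N_2)} \frac{\dist(\gamma, L)}{(1+\epsilon)^{|\gamma|}} = P_{aa}^{\epsilon}(N_2, L)\,,\]
which is exactly the conclusion \(\precision(N_1, L) \geq \precision(N_2, L)\) required by \(A2\).

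There is essentially no obstacle here; the proof is a one-liner about suprema over nested sets. The only points worth a remark are that the hypothesis \(L \subseteq \Language(N_1)\) is not actually needed for this direction of inequality (it only guarantees fitness, which is used by other axioms), and that the degenerate cases where \(\Language(N_1)\) or \(L\) is empty cause no trouble, since with the conventions already fixed in the excerpt (\(\dist(\gamma, \emptyset) \eqdef 1\), and the supremum over an empty language taken as \(0\)) the monotonicity of the supremum still holds verbatim. I would therefore present the proof in two displayed inequalities as above, with a single sentence of justification for each.
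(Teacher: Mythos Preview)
Your proposal is correct and follows essentially the same approach as the paper: both argue that the expression inside the supremum depends only on \(\gamma\) and \(L\), so \(\Language(N_1) \subseteq \Language(N_2)\) immediately gives monotonicity of the supremum, and subtracting from \(1\) yields the desired inequality. Your version is slightly more explicit (writing out the displayed inequalities and noting that the fitness hypothesis \(L \subseteq \Language(N_1)\) is unused), but the argument is the same.
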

\begin{proof}
Let $L \subseteq \Language(N_1) \subseteq \Language(N_2)$. The definitions
take the \(\sup_{\gamma \in \Language(N)}\) of an expression which does not
depend on the model. Since \(\Language(N_1) \subseteq \Language(N_2)\), the
\(\sup_{\gamma \in \Language(N_2)}\) ranges over a \(\subseteq\)-larger set than
the \(\sup_{\gamma \in \Language(N_1)}\). Therefore the result cannot be
smaller, and we get \(\precision(N_1, L) \geq \precision(N_2, L)\).
\qed
\end{proof}

Our metrics may not satisfy the strict inequality required by \(A3\): they
satisfy only a weaker version of \(A3\) with non-strict inequality, but, as
observed in \cite{TAX20181}, this is then simply subsumed by \(A2\). The authors
of \cite{TAX20181} precisely introduced \(A3\) after arguing that, in case of a
flower model, a strict inequality should be required.

Anyway, we show in Lemma~\ref{lem:A3} that any precision metric \(\precision\)
can be modified so that it satisfies this requirement of strict inequality
for the flower models.
\begin{lemma}
\label{lem:A3}
Let \(\precision\) be any precision metric. It is possible to define a metric
\(\precision'\) from \(\precision\) such that \(\precision'\) satisfies \(A3\):
it suffices to set the precision of the flower models to a value smaller than
all the other precision values (after possibly extending the target set of the
function). This guarantees \(A3\) and preserves all the other axioms.
\end{lemma}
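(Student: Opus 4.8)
The plan is to make precise the construction sketched in the statement, namely to define $\precision'$ by lowering the value on flower models below everything else, and then to verify that $A1$, $A2$, $A4$, $A5$ are preserved while $A3$ is newly obtained. First I would fix notation: call a model $N$ a \emph{flower model} (over $\Sigma$) if $\Language(N) = \Sigma^*$, and write $V \eqdef \{\precision(N,L) \mid N \text{ any model}, L \text{ any log}\}$ for the set of values attained by the original metric. If $V$ already has a minimum attained only by flower-model pairs there is nothing to do; in general I would pick a fresh value $v^- \notin V$ with $v^- < \inf V$ — extending the target set of the function as the statement explicitly allows (e.g.\ take the target set to be $V \cup \{v^-\}$ with the obvious order, or, if one insists on $[0,1]$, first rescale $\precision$ into $(0,1]$ and set $v^- = 0$). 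Then define
\[
\precision'(N,L) \eqdef
\begin{cases}
v^- & \text{if } \Language(N) = \Sigma^*,\\
\precision(N,L) & \text{otherwise.}
\end{cases}
\]

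Next I would check $A3$ directly. Suppose $L \subseteq \Language(N_1) \subsetneq \Language(N_2) = \Sigma^*$. Since $\Language(N_1) \subsetneq \Sigma^*$, $N_1$ is not a flower model, so $\precision'(N_1,L) = \precision(N_1,L) \in V$, whereas $\precision'(N_2,L) = v^- < \inf V \leq \precision(N_1,L)$; hence $\precision'(N_1,L) > \precision'(N_2,L)$, which is exactly $A3$. For $A1$ and $A4$: $\precision'$ is still a function (the case split is on $\Language(N)$, which is determined by $N$), giving $A1$; and if $\Language(N_1) = \Language(N_2)$ then either both are flower models and both get $v^-$, or neither is and both get $\precision(N_1,L) = \precision(N_2,L)$ by $A4$ for $\precision$, giving $A4$ for $\precision'$.

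The slightly more delicate cases are $A2$ and $A5$, because they are monotonicity statements and the modification changes values only on flower models. For $A2$, assume $L \subseteq \Language(N_1) \subseteq \Language(N_2)$; I would distinguish three cases. If $\Language(N_2) \neq \Sigma^*$, then neither model is a flower model, so $\precision'$ agrees with $\precision$ on both and $A2$ is inherited. If $\Language(N_1) = \Sigma^*$, then $\Language(N_1) \subseteq \Language(N_2) \subseteq \Sigma^*$ forces $\Language(N_2) = \Sigma^*$, so both get $v^-$ and the inequality holds with equality. The remaining case is $\Language(N_1) \neq \Sigma^*$ but $\Language(N_2) = \Sigma^*$: then $\precision'(N_1,L) = \precision(N_1,L) \geq \inf V > v^- = \precision'(N_2,L)$, so $\precision'(N_1,L) \geq \precision'(N_2,L)$ as required. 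The argument for $A5$ is even easier, since there the model $N$ is fixed across $L_1 \subseteq L_2 \subseteq \Language(N)$: if $\Language(N) = \Sigma^*$ both sides equal $v^-$, and otherwise $\precision'(N,L_i) = \precision(N,L_i)$ for $i=1,2$ and $A5$ is inherited verbatim.

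The only real obstacle — and it is conceptual rather than technical — is the legitimacy of ``extending the target set'': one must be a little careful that $\inf V$ is not itself $-\infty$ or otherwise pathological, and that introducing $v^-$ does not secretly break some implicit codomain assumption (the paper's own metrics live in $[0,1]$). I would address this up front by noting that any metric can be post-composed with an order-embedding of its value set into $(0,1]$ (e.g.\ $x \mapsto \tfrac{1}{2}(1 + x/(1+|x|))$ type rescalings preserve all four order-theoretic axioms $A1$–$A2$, $A4$–$A5$), after which setting $v^- = 0$ is harmless and keeps the codomain inside $[0,1]$; this makes the construction self-contained and the preservation arguments above go through unchanged.
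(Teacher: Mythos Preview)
Your proposal is correct and follows essentially the same approach as the paper: define $\precision'$ by assigning flower models a value strictly below all others, then verify $A3$ holds by construction and that $A1$, $A2$, $A4$, $A5$ are preserved by a case analysis on whether the models involved are flower models. The paper's own proof is only a terse sketch (``by construction'' for $A3$, ``study the different cases'' for the rest), so your write-up is in fact considerably more detailed and explicit than what appears there, including your careful treatment of the codomain-extension issue.
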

\begin{proof}
  The new metric \(\precision'\) satisfies \(A3\) by construction. Moreover, if
  \(\precision\) is deterministic (\(A1\)), then \(\precision'\) also is. For
  preservation of \(A_2\), \(A_4\) and \(A_5\), it suffices to study the
  different cases (separate flower model and others) to show that the equality
  and non-strict inequalities are preserved.
\qed
\end{proof}
We consider that satisfying \(A3\) is a very artificial issue. However, if
really the transformation defined in Lemma~\ref{lem:A3} had to be implemented,
it would imply that, in order to compute the precision, one would have to decide
if the model \(N\) is a flower model, i.e.\ if \(\Language(N) = \Sigma^*\). This
is known as the universality problem.
This problem is, in theory, highly intractable\footnote{
  This universality problem is PSPACE-complete for non-deterministic finite
  state automata (NFSA) \cite{krotzsch:hal-01571398}, and here the NFSA to consider would
  be the reachability graph of \(N\) (for \(N\) \(k\)-bounded), which is
  exponential in the size of \(N\). Hence, deciding universality for
  \(k\)-bounded labeled Petri nets is in EXPSPACE.
}.
But again, this is very artificial: in practice it suffices to explore at a very
short finite horizon to detect many many non-flower models.

\begin{lemma}
\label{lem:A4}
The metric $P_{aa}^{\epsilon}$ 
satisfies $A4$.
\end{lemma}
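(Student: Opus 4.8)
The plan is to observe, exactly as in the proofs of Lemmas~\ref{lem:A1} and~\ref{lem:A2}, that the quantity $P_{aa}^{\epsilon}(N,L) = 1 - \sup_{\gamma \in \Language(N)} \frac{\dist(\gamma, L)}{(1+\epsilon)^{|\gamma|}}$ refers to the model $N$ only through its language $\Language(N)$; the expression being optimized, $\frac{\dist(\gamma, L)}{(1+\epsilon)^{|\gamma|}}$, depends solely on the run $\gamma$ (and on $L$ and $\epsilon$), not on any structural feature of $N$.

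From this, the argument is immediate: assume $\Language(N_1) = \Language(N_2)$. Then the supremum defining $P_{aa}^{\epsilon}(N_1, L)$ is taken over precisely the same set of runs, of the same function, as the supremum defining $P_{aa}^{\epsilon}(N_2, L)$. Hence the two suprema are equal, and therefore $P_{aa}^{\epsilon}(N_1, L) = P_{aa}^{\epsilon}(N_2, L)$, which is exactly the statement of $A4$.

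There is essentially no obstacle here — the property holds \emph{by construction} of the metric, since it was defined as a function of $\Language(N)$ rather than of a particular syntactic presentation of the model. The only thing worth remarking is that this is precisely why $A4$ comes ``for free'' for our metric, in contrast to metrics that inspect the structure of $N$ (e.g.\ via its reachability graph or state space) in a way not invariant under language equivalence. I do not expect to need Lemma~\ref{lem:sup_epsilon_reached} or any complexity result for this; it is a one-line consequence of the shape of Definition~\ref{def:Log-precision-loops}.
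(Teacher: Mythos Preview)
Your proof is correct and matches the paper's own argument: the paper simply notes that the metric is ``behaviorally defined'' (i.e.\ depends on \(N\) only through \(\Language(N)\)), which is exactly your observation. The paper adds one extra remark you omit, namely that \(A4\) also follows from \(A2\) whenever \(L \subseteq \Language(N_1) = \Language(N_2)\), but this is a side comment rather than a different proof.
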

\begin{proof}
This trivially holds since both metrics are behaviorally defined.
Also, we copied this axiom \(A4\) from \cite{TAX20181}, but observe that it is a
simple corollary of \(A2\) as soon as \(L \subseteq \Language(N_1) =
\Language(N_2)\).
\qed
\end{proof}

\begin{lemma}
\label{lem:A5}
Metrics $P_{aa}^{\epsilon}$ (for any \(\epsilon\)) satisfies $A5$.
\end{lemma}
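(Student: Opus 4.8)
The plan is to reduce $A5$ to the elementary observation that the map $L \mapsto \dist(\gamma, L)$ is antitone with respect to set inclusion. Recall $\dist(\gamma, L) = \min_{\sigma \in L}\dist(\gamma, \sigma)$, with the convention $\dist(\gamma, \emptyset) \eqdef 1$. First I would fix $L_1 \subseteq L_2 \subseteq \Language(N)$ and an arbitrary full run $\gamma \in \Language(N)$, and show $\dist(\gamma, L_2) \leq \dist(\gamma, L_1)$: if $L_1 = \emptyset$ this is immediate, since $\dist(\gamma, L_1) = 1$ is an upper bound for the $[0,1]$-valued distance; otherwise it is the standard fact that a minimum over a larger set can only be smaller.

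Next I would divide both sides by the strictly positive quantity $(1+\epsilon)^{|\gamma|}$, obtaining
\[\frac{\dist(\gamma, L_2)}{(1+\epsilon)^{|\gamma|}} \;\leq\; \frac{\dist(\gamma, L_1)}{(1+\epsilon)^{|\gamma|}}\]
for every $\gamma \in \Language(N)$. Taking the supremum over $\gamma \in \Language(N)$ on both sides (the right-hand family is dominated pointwise by the left-hand one, hence its supremum dominates the other's), this gives $\sup_{\gamma \in \Language(N)} \frac{\dist(\gamma, L_2)}{(1+\epsilon)^{|\gamma|}} \leq \sup_{\gamma \in \Language(N)} \frac{\dist(\gamma, L_1)}{(1+\epsilon)^{|\gamma|}}$; negating and adding $1$ yields $P_{aa}^{\epsilon}(N, L_1) \leq P_{aa}^{\epsilon}(N, L_2)$, which is exactly $A5$. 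The case $\epsilon = 0$, i.e.\ $P_{aa}$ of Definition~\ref{def:Log-precision}, is covered verbatim.

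There is no genuine obstacle here; the only point that deserves a word is the boundary case of the empty log, where the convention $\dist(\gamma, \emptyset) = 1$ is precisely what makes the antitonicity extend to $L_1 = \emptyset$. It is also worth remarking, as was done for $A4$, that the fitting hypotheses $L_1, L_2 \subseteq \Language(N)$ play no role in the argument: the monotonicity $L_1 \subseteq L_2 \implies P_{aa}^{\epsilon}(N, L_1) \leq P_{aa}^{\epsilon}(N, L_2)$ holds for arbitrary logs, and this strengthened form subsumes the stated one.
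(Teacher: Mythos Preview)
Your proof is correct and follows exactly the paper's approach: use $L_1 \subseteq L_2 \implies \dist(\gamma, L_1) \geq \dist(\gamma, L_2)$ for every $\gamma$, then pass to the supremum. One small wording slip: in the parenthetical after taking the supremum you wrote that ``the right-hand family is dominated pointwise by the left-hand one,'' but it is the other way around (the displayed inequality and your final conclusion are correct).
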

\begin{proof}
  With \(L_1 \subseteq L_2\), for every \(\gamma \in \Language(N)\), we have
  \(\dist(\gamma, L_1) \geq \dist(\gamma, L_2)\), so the \(\sup_{\gamma \in
    \Language(N)}\) cannot be smaller for \(L_1\) than for \(L_2\). The rest
  does not depend on the log.
  \qed
\end{proof}




\section{Tool Support and Experiments}
\label{sec:experiments}

In this section we present the new tool implementing the results of this paper, and both a qualitative
and quantitative evaluation on state-of-the-art benchmarks from the literature. 
To compare the different distances based results, we denoted Leventshein distance based anti-alignment precision by $P_{aa}^{L}$ and Hamming distance based anti-alignment precision by $P_{aa}^{L}$.

\subsection{{\tt da4py}: A Python Library Supporting Anti-Alignments}
\label{sec:tool}

Several tools implement anti-alignments. \texttt{Darksider}, an Ocaml command line software, has already been presented in \cite{DBLP:conf/bpm/DongenCC16}. It creates the SAT formulas and calls the solver \texttt{Minisat+} \cite{minisat} to get the result. \texttt{ProM} software \cite{verbeek2010prom} also has an anti-alignment plugin, that computes anti-alignments in a brute force way. Recently, we have created a Python library in order to make our technique more accessible: \texttt{da4py} \footnote{https://github.com/BoltMaud/da4py, a Python version of \texttt{Darksider}}. Thanks to the use of the SAT library \texttt{PySAT} \cite{imms-sat18}, \texttt{da4py} allows one to run different state-of-the-art SAT solvers. Moreover, this SAT library uses an implementation of the RC2 algorithm \cite{ignatiev2018rc2} in order to get MaxSAT solutions, a variant that improves a lot the efficiency of computing anti-alignment. Finally,  \texttt{da4py} is compatible with the library \texttt{pm4py} \cite{berti2019process}, and uses the same data objects.   

Remarkably, in order to deal with large logs (as the ones shown in the quantitative evaluation part),  \texttt{da4py} has a variant that allows to compute a prefix of anti-alignments, thus alleviating the complexity by not requiring a full run but only a prefix. Accordingly, the corresponding precision measure is then a variant, that is normalized by the length of the anti-alignment prefix computed. Furthermore, for anti-alignments based on Levenshtein's distance, another simplification is to add a threshold on the number of editions (max\_d attribute) between the run and the traces, to compute a lower-bound for the anti-alignment instead of the complete anti-alignment. 

\subsection{Qualitative Comparison}
\label{sec:qualit}

\begin{table}
  \centering
  \footnotesize
  \begin{tabular}{l}
    Trace              \\
    \hline
    $\langle A,B,D,E,I \rangle$        \\
    $\langle A,C,D,G,H,F,I \rangle$   \\
    $\langle A,C,G,D,H,F,I \rangle$   \\
    $\langle A,C,H,D,F,I \rangle$      \\
    $\langle A,C,D,H,F,I \rangle$     \\
  \end{tabular}
  \medskip
  \caption{An example event log.}
  \label{tab:examplelog}
\end{table}

\begin{figure}[!p]
  \setlength{\intextsep}{15pt}
	\centering
	\begin{minipage}[c]{0.45\textwidth}
		\begin{figure}[H]
			\centering
			\includegraphics[width=\textwidth]{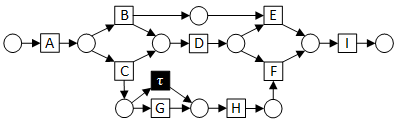}
			\caption{The ideal model. Fitting, fairly precise and properly generalizing.}
			\label{fig:model 1}
		\end{figure}
	\end{minipage}\hfill
	\begin{minipage}[c]{0.45\textwidth}
		\begin{figure}[H]
			\centering
			\includegraphics[width=\textwidth]{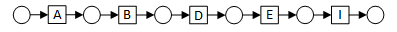}
			\caption{Most frequent trace. Precise, but not fitting or generalizing.}
			\label{fig:model 2}
		\end{figure}
	\end{minipage}

	\begin{minipage}[c]{0.3\textwidth}
		\begin{figure}[H]
			\centering
			\includegraphics[width=0.7\textwidth]{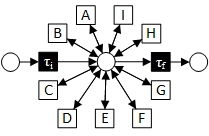}
			\caption{The flower model. Fitting and generalizing, but very imprecise.}
			\label{fig:model 3}
		\end{figure}
	\end{minipage}\hfill
	\begin{minipage}[c]{0.6\textwidth}
		\begin{figure}[H]
			\centering
			\includegraphics[width=\textwidth]{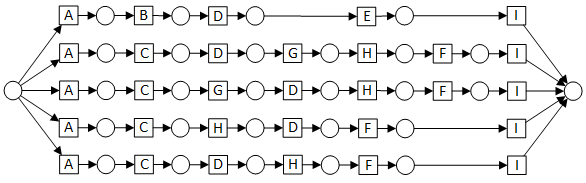}
			\caption{All traces separate. Fitting, precise, but not generalizing.}
			\label{fig:model 4}
		\end{figure}
	\end{minipage}

	\begin{minipage}[c]{0.45\textwidth}
		\begin{figure}[H]
			\centering
			\includegraphics[width=\textwidth]{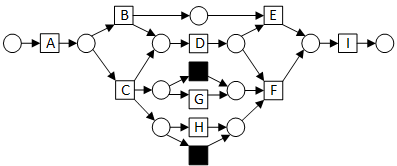}
			\caption{A model with G and H in parallel.}
			\label{fig:model 5}
		\end{figure}
	\end{minipage}\hfill
	\begin{minipage}[c]{0.45\textwidth}
		\begin{figure}[H]
			\centering
			\includegraphics[width=\textwidth]{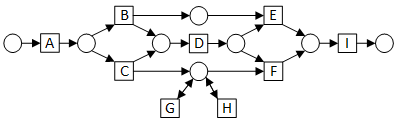}
			\caption{A model with G and H in self-loops}
			\label{fig:model 6}
		\end{figure}
	\end{minipage}
	\begin{minipage}[c]{0.45\textwidth}
		\begin{figure}[H]
			\centering
			\includegraphics[width=\textwidth]{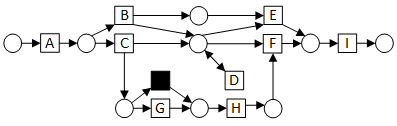}
			\caption{A model with D in a self-loop}
			\label{fig:model 7}
		\end{figure}
	\end{minipage}\hfill
	\begin{minipage}[c]{0.45\textwidth}
		\begin{figure}[H] 
			\centering
			\includegraphics[width=0.8\textwidth]{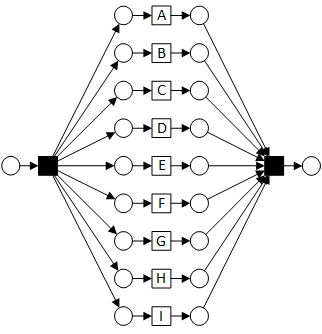}
			\caption{A model with all transitions in parallel.}
			\label{fig:model 11}
		\end{figure}
	\end{minipage}
	
\end{figure}

\begin{figure}[!t] 
	\begin{minipage}[c]{0.45\textwidth}
		\begin{figure}[H]
			\centering
			\includegraphics[width=0.9\textwidth]{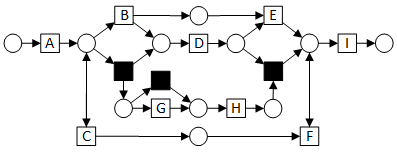}
			\caption{A model where C and F are in a loop, but need to be executed equally often to reach the final marking. 
				}
			\label{fig:model 9}
		\end{figure}
	\end{minipage}\hfill
\bigskip\bigskip
	\begin{minipage}[c]{0.45\textwidth} 
		\begin{figure}[H] 
			\centering
			\includegraphics[width=0.85\textwidth]{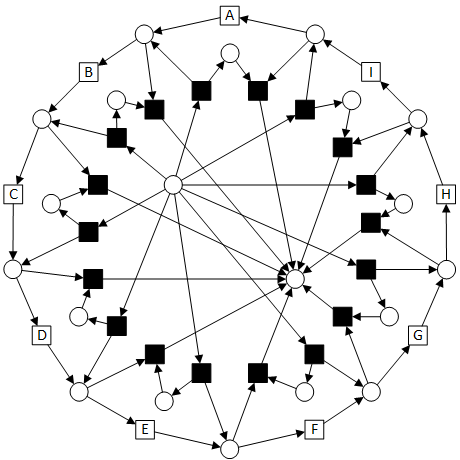}
			\caption{Round-robin model. The outer loop can be started at any point and then exited one transition before completing the loop.  
				}
			\label{fig:model 10}
		\end{figure}
	\end{minipage}
\end{figure}

A set of examples are taken from page 64 of \cite{rozinat2010conformance}, and consist of the simple event log shown in Table~\ref{tab:examplelog} aligned with 10 different process models. The log consists of only five different traces, with various frequencies. The models in Figures~\ref{fig:model 1} to \ref{fig:model 4} are four examples of models often used to show the differences between fitness, precision and generalization. The model in Figure~\ref{fig:model 1} shows the ``ideal'' process discovery result, i.e. the model that is fitting, fairly precise and properly generalizing.
The models in Figures~\ref{fig:model 5} to \ref{fig:model 11} present the same set of activities with varying loop and/or parallel constructs. Two new process models that describe particularly different routing logic from the previous models are depicted in Figures~\ref{fig:model 9} and \ref{fig:model 10}.




\begin{table}
  \centering
   \scriptsize
   \scalebox{0.9}{
  \begin{tabular}{l l || p{0.6cm} | p{0.6cm} || p{0.5cm} | p{0.6cm} || p{0.7cm}| p{0.7cm}|| p{0.6cm} | p{0.5cm} }
    Model		     &                  &$P_{ET}$&$P_{ETC}$&$P_{a}$&$P_{ne}$&$MAP^3$&$MAP^7$&$P^H_{aa}$ & $P^L_{aa}$\\ \hline
    Fig.~\ref{fig:model 1} &Generating model	&0.992	 &0.994	   &0.982  &0.995   & 0.880 &0.852     & 0.818&0.955 \\
    Fig.~\ref{fig:model 2} &Single trace	&1.000	 &1.000	   &1.000  &0.893   & 1.000&1.000    & 1.000& 1.000  \\
    Fig.~\ref{fig:model 3} &Flower model	&0.136	 &0.119	   &0.142  &0.117   &0.003  &0.000   &0.181&0.364 \\
    Fig.~\ref{fig:model 4} &Separate traces	&1.000	 &0.359	   &1.000  &0.985   &1.000&1.000   & 1.000&1.000  \\
    Fig.~\ref{fig:model 5} &G,H in parallel	&0.894	 &0.936	   &0.947  &0.950   &0.564&0.535 & 0.818&0.955\\
    Fig.~\ref{fig:model 6} &G,H as self-loops	&0.884	 &0.889	   &0.947  &0,874   &0.185&0.006    & 0.272&0.727\\
    Fig.~\ref{fig:model 7} &D as self-loop	&0.763	 &0.760	   &0.797  &0.720   &0.349&0.069&  0.272&0.727 \\
    Fig.~\ref{fig:model 11}&All parallel	&0.273	 &0.170	   &0.336  &0.158   &0.006&0.000&    0.181&0.500 \\
    Fig.~\ref{fig:model 9} &C,F equal loop	&0.820	 &0.589	   &0.839  &0.600  & $\quad$-- &$\quad$ --   &   0.818&0.910 \\
    Fig.~\ref{fig:model 10}&Round-robin	&0.579	 &0.185	   &0.889  &0.194   &0.496&0.274    & 0.181&0.636\\
  \end{tabular}}
  \smallskip
  \caption{Precision measures for all models}
  \label{tab:results}
\end{table}

\begin{table}
  \centering
   \scriptsize
   \scalebox{0.88}{
  \begin{tabular}{l l | c | c |}
    Model		     &                  & Hamming distance Anti-alignments&Edit distance Anti-alignments\\ \hline
    Fig.~\ref{fig:model 1} &Generating model	&$ \sequence{A, C, G, H, D, F, I}$ & $\sequence{A, C, G, H, D, F, I}$\\
    Fig.~\ref{fig:model 2} &Single trace	&$\sequence{A, B, D, E, I}$&$\sequence{A, B, D, E, I}$ \\
    Fig.~\ref{fig:model 3} &Flower model	&$\sequence{tau, D, I, C, C, E, C, F, D, I, tau}$&$\sequence{tau, G, G, G, G, G, G, G, G, G, tau}$\\
    Fig.~\ref{fig:model 4} &Separate traces	&$\sequence{A, C, G, D, H, F, I}$&  $\sequence{A, C, D, H, F, I}$\\
    Fig.~\ref{fig:model 5} &G,H in parallel	&$\sequence{A, C, D, H, G, F, I}$&$\sequence{A, C, tau, tau, D, F, I}$\\
    Fig.~\ref{fig:model 6} &G,H as self-loops	&$\sequence{A, C, H, H, D, G, G, G, G, F, I}$&$\sequence{A, C, G, G, G, G, G, D, G, F, I}$\\
    Fig.~\ref{fig:model 7} &D as self-loop	&$\sequence{A, C, D, D, D, D, G, H, D, F, I}$& $\sequence{A, B, D, D, D, D, D, D, D, E, I}$\\
    Fig.~\ref{fig:model 11}&All parallel	&$\sequence{tau, F, B, I, E, G, A, D, C, H, tau}$& $\sequence{tau, I, E, D, B, C, F, A, H, G, tau}$\\
    Fig.~\ref{fig:model 9} &C,F equal loop	&$\sequence{A, C, B, D, E, I}$&$\sequence{A, C, B, D, E, F, I}$\\
    Fig.~\ref{fig:model 10}&Round-robin	&$\sequence{tau, I, A, B, C, D, E, F, G, H, tau}$&$\sequence{tau, E, F, G, H, I, A, B, C, D, tau}$\\

  \end{tabular}}
  \smallskip
  \caption{Anti-alignments of maximal size 11 for all models : interestingly, in some cases the Hamming anti-alingments are equal or very similar to the Levenshtein ones, which validates the use of the former (whose SAT encoding is simpler) as alternative for the latter. Also notice that in some cases, the fact that we require in this qualitative comparison to deal with full runs implies very different anti-alignments are obtained through the two distances (e.g., Figure~\ref{fig:model 11}).}
  \label{tab:aa}
\end{table}

Table~\ref{tab:results} compares some precision metrics for the models in Figures~\ref{fig:model 1} to
\ref{fig:model 10} with the two possible metrics proposed in this paper: $P^H_{aa}$ and $P^L_{aa}$, representing the precision for the Hamming and Levenshtein distance, respectively. To understand the values provided by our proposed metrics, the reader
can find in Table~\ref{tab:aa} the anti-alignments computed for each process model, both for the Hamming and the Levenshtein distances.
Observe that, as a consequence of Lemma~\ref{lem:pH_lt_pL}, $P^H_{aa} \leq P^L_{aa}$ for all models.
The 
values of 
$P_{a}$ are defined as in~\cite{AryaThesis}. The precision values in $P_{ET}$ and 
$P_{ETC}$ are defined in \cite{AdriansyahMCDA15}.  The value $P_{ne}$ 
denotes the precision metric from~\cite{BrouckeWVB14}. Finally, the values $MAP^3$ and $MAP^7$
are defined in \cite{AugustoACDRR18}\footnote{Notice that the metrics $MAP^3$
  and $MAP^7$ are not applicable for one of the benchmarks of this paper, due to
  the existence of unbounded constructs.}; because they rely on behavioral
abstractions which are based on refinements of the directly-follows graph
\cite{leemans2013discovering}, they can dramatically overestimate the behavior
of the model, therefore they tend to be pessimistic. For instance, for well
chosen values of \(k\), \(MAP^k\) would use the same abstraction for two models
\(N_1\) and \(N_2\) having languages \(\Language(N_1) = a^*b^3c^* | d^*b^3e^*\)
and \(\Language(N_2) = (a^* | d^*) b^3 (c^* | e^*)\) and assign them the same
precision. Intuitively, this is not satisfactory: assume that the log contains
only traces in \(\Language(N_1)\), then model \(N_1\) should score much better than
\(N_2\).

Clearly, the existing 
precision metrics do not agree on all models and do not always agree with the intuition for precision. 
For example, the very precise model of Figure~\ref{fig:model 4} is considered to have a precision of $0.359$ by the 
$P_{ETC}$  metric. Also, the model of 
Figure~\ref{fig:model 6} scores very high in $P_{ET}$-$P_{ETC}$-$P_a$, although a trace with a thousand G's is 
possible in the model. Our metric $P^L_{aa}$ for this model however cannot fully penalize this, due to the requirement to deal
with full run anti-alignments (see the anti-alignment in Table~\ref{tab:aa}, which only differs with the log in the infix part). However, if we allowed for a larger run, e.g., doubling the length of the anti-alignment ($22$), the value would drop
from $P^L_{aa}= 0.727$ to $P^L_{aa}=0.613$ . 
A similar situation, with the same antidote, happens with the flower model, i.e., the value obtained for the Levenshtein based anti-alignment is higher than the rest, due to requiring the $\tau$ transitions that do not penalize in terms of distance.



%
%

\subsection{Quantitative Comparison}
\label{sec:quant}

In this section, we evaluate the techniques of this paper for 12 available real-life logs of varying sizes, and 
which have been recently used for benchmarking process discovery methods~\cite{AugustoCDRMMMS19}. They cover different fields, 
ranging from finance through to healthcare and government. The logs are publicly available at the 4DTU Data Center\footnote{\url{https://data.4tu.nl/repository/collection:event_logs_real}}. The experiments have been executed on a virtual machine with Debian 3.16.43-2+deb8u2 system, 12 CPU Intel Xeon 2.67GHz and 50GB RAM. 

\subsubsection{Benchmark Description}

For each of the twelve logs, we compute precision for the models automatically discovered with two state-of-the-art discovery algorithms: Inductive Miner \cite{leemans2013discovering}, and Split Miner \cite{augusto2017split}. We compare the performance of computing our metric against the one required for the $MAP$ metric \cite{AugustoACDRR18}, which is known to be the most recent precision contribution. 
Given that both methods are known to incur into a high complexity (specially in terms of memory footprint),
we use samples of size 10 and 100 of the initial log. Similarly to \cite{AugustoACDRR18}, we present average execution times of precision computations of the twelve models and the samples. This is enough to get an insight on the empirical positioning of our approach with respect to the aforementioned work.
Since the compared techniques strongly depend on models sizes, that vary between 8 to 150 transitions, we also report the minimal and maximum time performance of each method. 

The notation $P_{aa_{10}}$ in Tab.~\ref{tab:results_real} indicates that we use the simpler prefix variant of anti-alignments (see Section~\ref{sec:tool}). As explained in Section ~\ref{sec:tool}, a second parameter, $max\_d$, helps to reduce complexity of the SAT formula by bounding the number of maximal differences: in all but the last row, we use $max\_d =20$. In the last row we set $max\_d$ to $10$, due to the computational requirements of that last set of benchmarks.

In this section, we compare our work against $MAP$ with $k=3$ as advised by the authors. 


\subsubsection{Discussion}

By relying on approximations of anti-alignments (with prefixes instead of full runs, but also by limiting the maximal number of differences),  our approach tends to run in reasonable time. We compare our method to $MAP^3$ which has shown good performances in \cite{AugustoACDRR18}. For some models and logs, for instance model of BPI'2015 discovered with inductive miner and a log of size 10, we obtain better execution times for both Hamming and Levenshtein based precision. More generally Tab.~\ref{tab:results_real} shows that Hamming based precision, which gives an approximation of Levenshtein based precision, is often the quicker approach. We see that for the first time, our tool can deal reasonably with large problem instances, when it is compared to the previous implementations. It should be stressed that several engineering efforts can be incorporated into the current tool, to drastically reduce part of its computational demands.

\begin{table}
  \centering
  \begin{tabular}{| c| c| c| c| c| c| c| c|  }
   \hline
    \multirow{2}{*}{$|L|$} & \multirow{2}{*}{Method} &\multicolumn{3}{c|}{Inductive Miner} & \multicolumn{3}{c|}{Split Miner}\\ 
               \cline{3-8} 

    & & avg&min &max &avg &min &max \\  \hline
    \multirow{3}{*}{10 } & $MAP^3$ &102.314 &0.519 &896.299 &0.577 &0.439& 0.823\\ 
    & $P_{aa_{10}}^H$ &6.981 &0.903 &17.843& 6.154 &1.078& 14.707\\ 
    & $P_{aa_{10}}^L$ &121.95 &19.201 &246.876&102.103 &28.574 &189.876\\ \hline
         \multirow{3}{*}{100 } &$MAP^3$ &111.219 &0.612 &889.024 &0.759 &0.526 &1.129\\ 

         & $P_{aa_{10}}^H$ &16.92& 7.779 &30.492&15.544 &8.278& 27.059\\ 
    & $\sim P_{aa_{10}}^L$  &632.519 &99.269& 1229.862& 499.762 &138.501 &921.791\\ \hline
  \end{tabular}
  \smallskip
  \caption{Execution Times (in seconds) using the twelve real-life logs 
  obtained on a
          virtual machine with CPU Intel Xeon  2.67GHz and 50GB RAM.}
  \label{tab:results_real}
\end{table}

\section{Related Work}
\label{sec:related}


The seminal work in~\cite{RozinatA08} was the first one in relating observed behavior (in form of a set of
traces), and a process model. In order to asses how far can the model deviate from the log, the {\em follows} and {\em precedes} 
relations for both model and log are computed, storing for each relation whereas it {\em always} holds or only {\em sometimes}. In 
case of the former, it means that there is more variability. Then, log and model follows/precedes matrices are compared, and in those 
matrix cells where the model has a {\em sometimes} relation whilst the log has an {\em always} relation indicate that the model allows for 
more behavior, i.e., a lack of precision. This technique has important drawbacks: first, it is not general since in the presence of
loops in the model the characterization of the relations is not accurate~\cite{RozinatA08}. Second, the method requires a full 
state-space exploration of the model in order to compute the relations, a stringent limitation for models with large or even infinite 
state spaces.

In order to overcome the limitations of the aforementioned technique, a different approach was proposed in~\cite{JorgeMunozPhD}. The idea
is to find {\em escaping arcs}, denoting those situations where the model starts to deviate from the log behavior, i.e., events allowed
by the model not observed in the corresponding trace in the log. The exploration of escaping arcs is restricted by the log behavior, and 
hence the complexity of the method is always bounded. By counting how many escaping arcs a pair (model, log) has, one can estimate the 
precision of a model. Although being a sound estimation for the precision metric, it may hide the problems we are considering in this paper,
i.e., models containing escaping arcs that lead to a large behavior.

Less related is the work in~\cite{BrouckeWVB14}, where the introduction of {\em weighted artificial negative events} from a log is proposed. 
Given a log $L$, an artificial negative event is a trace $\sigma' = \sigma\cdot a$ where $\sigma \in L$, but $\sigma' \notin L$. Algorithms
are proposed to weight the confidence of an artificial negative event, and they can be used to estimate the precision and generalization
of a process model~\cite{BrouckeWVB14}. Like in~\cite{JorgeMunozPhD}, by only considering one step ahead of log/model's behavior, this 
technique may not catch serious precision/generalization problems.

Finally, an automata-based technique is presented in~\cite{Leemans20161}. The technique projects event log and process model onto all subsets
of activities of size $k$, and generates minimal deterministic finite automata that conjunctively represents both perspectives, so that precision
can be estimated by confronting the model and the conjunction automata.

As we mentioned in Section~\ref{sec:axioms}, a recent study elaborated on the guarantees provided by the aforementioned precision metrics~\cite{TAX20181}.
The study shows that none of the precision metrics up to that moment (including an earlier version of the metrics described in this paper~\cite{DBLP:conf/bpm/DongenCC16}) was able to satisfy all the properties together. Recently, however, a metric was proposed in~\cite{AugustoACDRR18}
which satisfies the 5 axioms. Likewise to the metric in~\cite{AugustoACDRR18}, the new precision metric proposed in this paper is shown to satisfy either these properties, or a weaker version of them.

\section{Conclusions and Future Work}
\label{sec:conclusions}

Conformance checking is becoming an important tool to certify the correct execution of processes in organizations. Quality
metrics for conformance checking are crucial to evaluate quantitatively process models, and among the four dimensions to
attain this task, precision is an important one. This paper proposes anti-alignments as a crucial artefact to compute an
accurate metric for precision. In contrast to existing metrics, anti-alignment based precision metrics satisfy most of the
properties for precision described in~\cite{TAX20181}.

As future work, we plan to explore new algorithms to compute anti-alignments so that the complexity of the problem can be
alleviated in practice. Also, we will consider new areas for the application of anti-alignments, like process model comparison or novel
ways of computing the generalization of process models (we already suggested one such approach in a recent paper~\cite{DBLP:conf/bpm/DongenCC16}).

We see potential for other notions of anti-alignments, for instance
anti-alignments can be generalized between two nets: to find behavior of the
former that cannot be found in the latter. The application of this generalization goes beyond the field of process mining: it can be
used, for instance, to allow generating behavior that is not possible
in a model, by simply aligning the net that allows for all the possible behaviors with the original net. This way, one can 
compute the most deviating trace of a net with respect to its own behavior, up to a given length. 
A second application of anti-alignments between two nets is for assessing behavioral process model 
similarity: this way, example
behavior that is in one model and not in the other can be provided as a hint on the dissimilarity between two models. Previous
techniques focus on the differences on either the causal relations, or those parts of the models
that are different~\cite{AbelArmasPhd,Dijkman08,PolyvyanyyWCRH14,WeidlichMW11,WeidlichPMW11}.

\subsubsection*{Acknowledgments.}
This work has been partially supported by funds from ENS Paris-Saclay (project
ProMAut of the Farman institute), the Spanish Ministry for Economy and
Competitiveness (MINECO) and the European Union (FEDER funds) under grant GRAMM
(TIN2017-86727-C2-1-R).

\bibliographystyle{plain}
\bibliography{Refs}
\end{document}